\newcommand{\dist}{P}
\newcommand{\alg}{\textsc{HEC}\xspace}
\newcommand{\hypoth}{h}
\newcommand{\hypothset}{H}
\newcommand{\allhypoth}{\mathcal{H}}
\newcommand{\regnoarg}{r}
\newcommand{\reg}[1]{\regnoarg_{#1}}
\newcommand{\regset}{R}
\newcommand{\regsetiff}{\regset_{\text{iff}}}
\newcommand{\regsetas}{\regset_{\text{as}}}
\newcommand{\allreg}{\mathcal{R}}
\newcommand{\numreg}{\vert \allreg \vert}
\newcommand{\subregnoarg}{g}
\newcommand{\subreg}[1]{\subregnoarg_{#1}} 
\newcommand{\subregset}{G}
\newcommand{\subregsettwo}{\boldsymbol{\zeta}}
\newcommand{\allsubreg}{\mathcal{G}}
\newcommand{\subreghat}[1]{\widehat{\subregnoarg}_{#1}} 
\newcommand{\hypergraph}{\mathbf{G}}
\newcommand{\hypergraphreg}{\hypergraph^{r}}
\newcommand{\hypergraphsplit}{\hypergraph^{s}}
\newcommand{\hypernodegeneral}{X}
\newcommand{\hyperedgegeneral}{E}
\newcommand{\hyperedgeset}{\mathcal{E}}
\newcommand{\hyperedge}{e}
\newcommand{\hyperedgesetobs}{\hyperedgeset(\subtestobs)}
\newcommand{\edgeweightfunc}{w}
\newcommand{\hyperedgeelements}{\{\subreg{1},\dots,\subreg{{\maxcard}}\}}
\newcommand{\hyperedgeelementscardvar}{\{\subreg{1},\dots,\subreg{{\cardvar}}\}}
\newcommand{\marginalfHEC}{\Delta}
\newcommand{\falg}{f_{\text{\alg}}}
\newcommand{\fhec}{f_{\text{\alg}}}
\newcommand{\subtestobs}{\mathcal{S}}
\newcommand{\subtestobsunion}{\subtestobs \cup \{(\actionitem,\hypoth(\actionitem))\}}
\newcommand{\subtestobsactions}{\subtestobs_{\actionset}}
\newcommand{\settestobs}{\testset\times\observationset}
\newcommand{\actionset}{T}
\newcommand{\actionitem}{t}
\newcommand{\testset}{\mathcal{T}}
\newcommand{\observationset}{\mathcal{O}}
\newcommand{\observationitem}{o}
\newcommand{\cost}{\mathcal{C}}
\newcommand{\allgroups}{\mathcal{G}}
\newcommand{\policy}{\pi}
\newcommand{\policyHEC}{\pi_{\text{HEC}}}
\newcommand{\version}{\mathcal{V}}
\newcommand{\edgeversion}{\mathcal{E}}
\newcommand{\probacro}{DRD\xspace}
\newcommand\numberthis{\addtocounter{equation}{1}\tag{\theequation}}
\newcommand{\eref}[1]{(\eqref{#1})}
\newcommand{\sref}[1]{Sec.~\ref{#1}}
\newcommand{\figref}[1]{Fig.~\ref{#1}}
\newcommand{\algoref}[1]{Alg.~\ref{#1}}
\newcommand{\tabref}[1]{Table~\ref{#1}}
\newcommand{\thmref}[1]{Theorem~\ref{#1}}
\newcommand{\lemmaref}[1]{Lemma~\ref{#1}}
\newcommand{\propref}[1]{Proposition~\ref{#1}}
\newcommand{\maxcard}{k}
\newcommand{\maxcardiff}{\maxcard_{\text{iff}}}
\newcommand{\maxcardas}{\maxcard_{\text{as}}}
\newcommand{\cardvar}{ {\widehat{\maxcard} } }
\newcommand{\bigo}[1]{\mathcal{O}(#1)}
\newcommand{\indicator}[1]{\mathbbm{1}(#1)}
\newcommand{\powersum}{PS}
\newcommand{\comsympoly}{CHP}
\DeclareMathOperator*{\argmax}{arg\,max}
\newtheorem{theorem}{Theorem}
\newtheorem{lemma}{Lemma}
\newtheorem{sublemma}{Lemma}[lemma]
\newtheorem{subsublemma}{Lemma}[sublemma]
\newtheorem{proposition}{Proposition}
\newcommand*\circled[1]{\tikz[baseline=(char.base)]{\node[shape=circle,draw,inner sep=2pt] (char) {#1};}}
\begin{document}
\setlength{\textfloatsep}{10.0pt plus 2.0pt minus 4.0pt}
\setlength{\dbltextfloatsep}{10.0pt plus 2.0pt minus 4.0pt}




%

%

\twocolumn[

\aistatstitle{Near Optimal Bayesian Active Learning for Decision Making}
\aistatsauthor{Shervin Javdani \And Yuxin Chen \And Amin Karbasi}
\aistatsaddress{Carnegie Mellon University \And ETH Z{\"u}rich \And ETH Z{\"u}rich}
\aistatsauthor{Andreas Krause \And J. Andrew Bagnell \And Siddhartha Srinivasa}
\aistatsaddress{ETH Z{\"u}rich \And Carnegie Mellon University \And Carnegie Mellon University} ]

\runningauthor{Shervin Javdani, Yuxin Chen, Amin Karbasi, Andreas Krause, J. Andrew Bagnell, Siddhartha Srinivasa }

\begin{abstract}

How should we gather information to make effective decisions? We address Bayesian active learning and experimental design problems, where we sequentially select tests to reduce uncertainty about a set of hypotheses. Instead of minimizing uncertainty per se, we consider a set of overlapping \emph{decision regions} of these hypotheses. Our goal is to drive uncertainty into a single decision region as quickly as possible. 

We identify necessary and sufficient conditions for correctly identifying a decision region that contains all hypotheses consistent with observations. We develop a novel {\em Hyperedge Cutting} (\alg) algorithm for this problem, and prove that is competitive with the intractable optimal policy. Our efficient implementation of the algorithm relies on computing subsets of the complete homogeneous symmetric polynomials. Finally, we demonstrate its effectiveness on two practical applications: approximate comparison-based learning and active localization using a robot manipulator.

\end{abstract}


\section{Introduction}\label{sec:intro}

Bayesian active learning addresses the problem of selecting a sequence of experiments, or \emph{tests}, to determine a hypothesis consistent with observations. This fundamental problem arises in a wide range of applications such as medical procedures, content search, and robotics. It has been studied in several domains, including machine learning~\citep{dasgupta04,balcan06agnostic,nowak_noisy_gbs}, statistics~\citep{lindley_1956,chaloner_1995}, decision theory~\citep{howard66voi}, and others.

For instance, in {\em automated medical diagnosis}~\citep{kononenko2001} we are presented with hypotheses about the state of a patient, and select medical tests to infer their illness. In {\em comparison-based learning}~\citep{goyal_2008, karbasi2012},  we infer a target in a database by sequentially presenting a user with pairs of candidates, and having the user select which is closer. In robotic \emph{active localization}, the robot attempts to identify its own or an object's location by probing, e.g., with touch or vision~\citep{fox_1998_active_localization, kollar2008efficient, hsiao_2008_robust_belief, javdani_2013_touchloc}. In general, the goal is to gather the necessary information while minimizing test cost.

In this paper, we develop a general framework for addressing these problems. Instead of indiscriminately minimizing uncertainty about hypotheses directly, we aim to reduce uncertainty in a structured way to facilitate \emph{decision making}. We suppose the hypothesis space is covered by a set of decision regions: Each region identifies the set of hypotheses for which it would succeed. Our goal is to select tests that quickly concentrate all consistent hypotheses in a single decision region.

Special cases of this general problem have been studied. In the so called {\em Optimal Decision Tree} (ODT) problem, each decision region corresponds to a single hypothesis. In this case, a greedy algorithm called \emph{Generalized Binary Search} (GBS) is known to perform near optimally, i.e., the expected number of observations is $O(\log n)$ more than the optimum policy where $n$ indicate the number of hypotheses~\citep{dasgupta04, guillory09, kosaraju99}. GBS greedily selects tests in expectation over the test outcomes to maximize the probability mass of eliminated hypotheses. Another special instance of our setting is the \emph{Equivalence Class Determination} (ECD) problem~\citep{golovin_bayesian_noisy_obs} where the set of hypotheses is (disjointly) partitioned-- that is, decision regions do not overlap and collectively cover the set of hypotheses. In this case, it is known that GBS performs poorly while greedily optimizing a more informative objective known as EC2 exhibits an $O(\log n)$ approximation guarantee~\citep{golovin_bayesian_noisy_obs}.   

In both aforementioned settings, decision regions are \emph{disjoint}. In this paper, we tackle the general case of \emph{overlapping decision regions}, a problem that is less understood. We develop a novel surrogate objective function, which we call {\emph{Hyperedge Cutting} (\alg)}, and prove that the policy which greedily maximizes this objective has strong theoretical guarantees. It relies on the fact that our proposed objective function satisfies \emph{adaptive submodularity}~\citep{golovin_adaptive_2011}, a natural diminishing returns property that generalizes the classical notion of submodularity to policies. 

We empirically evaluate our algorithm on two applications: approximate comparison-based learning~\citep{karbasi2012}, and active localization with a robot hand. In approximate comparison-based learning, a user is searching through set of items (e.g., movies), and is not particularly interested in a single item, but rather any suggestion from a given category (e.g., the horror genre). The search terminates once all items consistent with user responses are contained in a single category. Similarly, many actions in robotic manipulation, such as pushing a button or grasping an object, inherently tolerate some uncertainty. The robot need not know the exact location of an object, but rather must localize an object to a decision region to ensure it can successfully accomplish the task. An optimal policy achieves each of these with the smallest test cost.


We make the following contributions:
\begin{enumerate}[topsep=0pt,itemsep=0ex,partopsep=0ex,parsep=1ex]
\item We provide a necessary and sufficient condition for identifying if a decision region contains all hypotheses that are consistent with the tests performed.
\item We develop a novel algorithm -- {\emph{Hyperedge Cutting} (\alg)} -- and prove that it is competitive with the intractable optimal algorithm.
\item We provide an efficient way to implement our algorithm based on computing sums of the complete homogeneous symmetric polynomials.
\item We demonstrate the empirical effectiveness of our approach for both comparison-based learning and active localization in a manipulation task.
\end{enumerate}

\section{Problem Statement}\label{sec:problem}

\begin{figure*}[t!h]
\centering
 \subfigure[\emph{Regions and hypotheses}]{
\includegraphics[width=.31\textwidth]{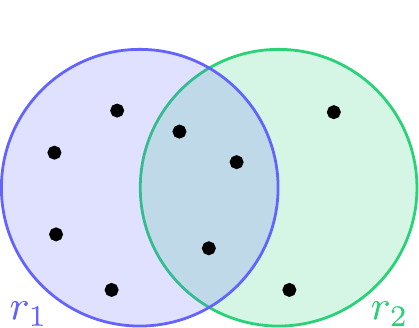}
 \label{fig:hypregion}
 }\enskip
\subfigure[\emph{Subregions and hypergaph}]{
\includegraphics[width=.31\textwidth]{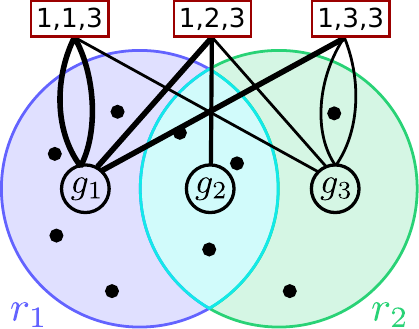}
 \label{fig:hypergraph}
 } \enskip
 \subfigure[\emph{Edges cut if all $\hypoth \in \subreg{3}$ inconsistent}]{
\includegraphics[width=.31\textwidth]{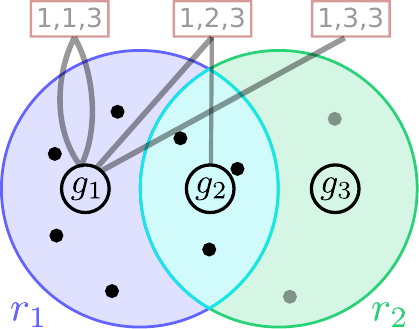}
 \label{fig:hypergraphcut}
 }\vspace{-2mm}
 \caption{\footnotesize \subref{fig:hypregion} An instance of the Decision Region Determination (\probacro) problem with two decision regions. Black dots represent hypotheses and circles represent decision regions.~\subref{fig:hypergraph} The resulting subregions and splitting hyperedges constructed by Hyperedge Cutting (\alg) algorithm. Thickness of edge represents weight, which is proportional to weight in subregion.~\subref{fig:hypergraphcut} Resulting hypergraph when all hypotheses in subregion $\subreg{3}$ inconsistent, causing all edges to be ``cut''.
 } \label{fig:hypeg} 
\end{figure*} 

We formalize our Bayesian active learning problem by assuming a prior probability distribution $\dist$ on a set of hypotheses $\allhypoth$ (e.g., state of patient, location of target). By conducting tests from a set of tests $\testset$, we gain information about the true, initially unknown hypothesis. 


More formally, for a given hypothesis $\hypoth\in\allhypoth$, running a test $\actionitem\in\testset$ produces an outcome (deterministically) from a finite set of outcomes/observations $\observationset$. Thus, each hypothesis $\hypoth\in\allhypoth$ can be considered a function $\hypoth:\testset\rightarrow\observationset$ mapping tests to outcomes.
Suppose we have executed a set of tests $\actionset=\{\actionitem_1,\dots,\actionitem_m\}\subseteq\testset$ (e.g., medical tests we ran, items shown to the user, moves made by the robot), and have observed their outcomes $\hypoth(\actionitem_1),\dots,\hypoth(\actionitem_m)$. Our evidence so far is captured by a set of test-outcome pairs, $\subtestobs \subseteq \settestobs$, where $\subtestobs=\{(\actionitem_1,\hypoth(\actionitem_1)),\dots,(\actionitem_m,\hypoth(\actionitem_m))\}$.

Upon observing $\subtestobs$, we can rule out hypotheses inconsistent with our observations. We denote the resulting set of hypotheses by
\begin{equation}
    \displaystyle\version(\subtestobs)=\{\hypoth\in\allhypoth : \forall (\actionitem,\observationitem)\in\subtestobs,\hypoth(\actionitem)=\observationitem\}\label{eq:vs}
\end{equation}
In principle, we can now choose tests that reduce our uncertainty about the set of hypotheses directly. In many practical problems, we are primarily concerned about {\em reducing uncertainty for the purpose of making a decision}: it is not necessary to remove all uncertainty, but it is necessary to reduce uncertainty in a structured way to ensure a decision action will be successful. Choosing tests that reduce uncertainty dramatically, but still leave it unclear what action to choose, will not be effective. We now formalize this idea.

\paragraph{Active learning for decision making.}
Suppose we have a set of decisions $\allreg$, and the eventual goal of selecting a decision $\regnoarg \in \allreg$ after gathering information. For example, in medical diagnosis, we choose a treatment; in robotic manipulation, we press a button (\figref{fig:rob_expr_setup}); in content search, we recommend a particular movie.

Each decision region $\regnoarg$ corresponds to the set of hypotheses for which it would succeed, i.e., $\regnoarg \subseteq \allhypoth$.
Our problem is then captured by a \emph{hypergraph}, a generalization of a graph in which an edge can connect to any number of nodes. Briefly, a hypergraph $\hypergraph$ is a pair $\hypergraph = (\hypernodegeneral,\hyperedgegeneral)$, where $\hypernodegeneral$ is a set of elements called \emph{nodes}, and $\hyperedgegeneral$ is a collection of sets of $\hypernodegeneral$ called \emph{hyperedges}. We can specify our problem with a hypergraph, which we refer to as the \emph{region hypergraph} $\hypergraphreg  = (\allhypoth, \allreg)$.\footnote{We illustrate decision regions as circles (e.g., \figref{fig:hypregion}) - however, our method treats regions as arbitrary sets.}

Note that in general, multiple decisions are equally suitable for a hypothesis: In the robot example, multiple manipulation actions may succeed for an object location (\figref{fig:rob_expr_setup}); in movie recommendation, the user may be indifferent among sets of movies. Hence, we allow the decision regions to overlap (\figref{fig:hypregion}). Formally, we also assume that the set of hypotheses is covered by the collection of decision regions, i.e., $\allhypoth = \cup_{\allreg} \regnoarg$. 

The ultimate goal is to find a policy $\policy$ for running tests that allows us to determine a decision region $\regnoarg$ the true hypothesis is guaranteed to lie in. In other words, upon termination we require that $\version(\subtestobs)\subseteq \regnoarg$ for some $\regnoarg \in \allreg$. 

Thus, we seek a policy for selecting a minimal number of tests to determine a suitable decision. A {\em policy} $\policy$ is a function from a set of evidence so far $\subtestobs$, to the next test to choose (or to stop running tests). A policy is feasible if and only if it drives all remaining uncertainty into any single decision region, $\version(\subtestobs)\subseteq \regnoarg$. We define the expected cost (i.e., number of tests\footnote{Note that while we focus on tests with unit cost, our results generalize to tests with non-uniform costs.}) of policy $\policy$ as:
$$\cost(\policy) = \sum_{\hypoth\in\allhypoth}P(\hypoth) \vert\testset(\policy,\hypoth)\vert,$$ where $\testset(\policy,h)$ is the set of tests policy $\policy$ chooses in case the correct hypothesis is $\hypoth$. 
Given this, we seek a feasible policy of minimal cost, i.e., 
\begin{equation}
\policy^*=\arg\min_{\policy} C(\policy)\text{ s.t. }\forall\hypoth, \exists \regnoarg: \version(\testset(\policy,\hypoth))\subseteq \regnoarg\label{eq:probstat}
\end{equation}
We call Problem~\eqref{eq:probstat} the {\em Decision Region Determination (\probacro) Problem}.

Special cases of Problem~\eqref{eq:probstat} have been studied before. In particular, the special case where each hypothesis is contained in a dedicated region is called the Optimal Decision Tree (ODT) problem~\citep{kosaraju99}. More generally, the special case where the regions {\em partition} the hypothesis space (i.e., do not overlap), is called the Equivalence Class Determination (ECD) Problem~\citep{golovin_bayesian_noisy_obs}. For both of these special cases, it is known that finding a policy $\policy$ for which $\cost(\policy)\leq \cost(\policy^*) o(\log n)$ is NP-hard \citep{chakaravarthy07decision}. Here, $\policy^*$ indicates the optimum policy. To the best of our knowledge, there are no efficient algorithms with theoretical approximation guarantees for the general \probacro problem. In the following, we present such an algorithm.

\section{The \alg Algorithm}\label{sec:results}
We now introduce and analyze our algorithm -- the {\em Hyperedge Cutting} (\alg) approach. 

\subsection{Overview}
Our key strategy is to transform the \probacro Problem~\eqref{eq:probstat} into an alternative representation -- a different hypergraph for splitting decision regions. Observing certain test outcomes corresponds to downweighting or cutting hyperedges in this hypergraph. The construction is chosen so that cutting all hyperedges is a necessary and sufficient condition for driving all uncertainty into a single decision region. We then prove that a simple greedy algorithm, which chooses tests that reduce hyperedge weight maximally (in expectation), implements a policy that is competitive with the optimal (intractable) policy for Problem~\eqref{eq:probstat}. In \sref{sec:implementation}, we show how this greedy algorithm can be efficiently implemented.


\subsection{Splitting hypergraph construction} 
We construct a different hypergraph, the \emph{splitting hypergraph} $\hypergraphsplit$, and define our objective on that. Here, our hyperedges are not sets, but \emph{multisets}, a generalization of sets where members are allowed to appear more than once. As a result, a node can potentially appear in a hyperedge multiple times. The cardinality of a hyperedge refers to how many nodes it is connected to. 

We observe that for solving the \probacro problem, we can group together all hypotheses that share the same region assignments. We refer to this grouping as a \emph{subregion} $\subregnoarg$, and the set of all subregions as $\allsubreg$. More formally, for any pair $\hypoth_k\in\subreg{i}$ and $\hypoth_l\in\subreg{i}$, we have $\hypoth_k\in \reg{j}$ if and only if $\hypoth_l\in \reg{j}$. In a slight abuse of notation, we say that a subregion is contained in a region, $\subregnoarg \in \regnoarg$, if  $\forall \hypoth \in \subregnoarg, \hypoth \in \regnoarg$ (\figref{fig:hypergraph}). Similarly, we say that $\hypoth \in \hyperedge$ if $\exists \subregnoarg \in \hyperedge \text{ s.t. } \hypoth \in \subregnoarg$.
It is easy to see that all remaining hypotheses $\version(\subtestobs)$ are contained in $\regnoarg$ if and only if all remaining subregions are contained in $\regnoarg$.

We construct the splitting hypergraph $\hypergraphsplit$ over these subregions. Each subregion $\subregnoarg \in \allsubreg$ corresponds to a node. The hyperedges $\hyperedge \in \hyperedgeset$ consist of all multisets of precisely $\maxcard$ subregions, $\hyperedge=\hyperedgeelements$, such that a single decision region does not contain them all (we will describe how $\maxcard$ is selected momentarily). Note that hyperedges can contain the same subregion multiple times. Formally,
\begin{equation}\label{eq:hyper}
\hyperedgeset = \{\hyperedge :  \vert\hyperedge\vert=\maxcard \wedge\nexists\; \regnoarg \text{ s.t. } \forall \hypoth \in\hyperedge, \hypoth \in  \regnoarg \}.
\end{equation}
 Our splitting hypergraph is defined as $\hypergraphsplit = (\allsubreg, \hyperedgeset)$. \figref{fig:hypergraph} illustrates the splitting hypergraph obtained from the DRD instance of \figref{fig:hypregion}.

 \paragraph{Hyperedge Cardinality $\mathbf{\maxcard}$.}
Key to attaining our results is the proper selection of hyperedge cardinality $\maxcard$. If $\maxcard$ is too small, our results won't hold, and our algorithm won't solve the \probacro problem. If $\maxcard$ is too large, we waste computational effort, and our theoretical bounds loosen. Here, we define the cardinality we use practically. Our theorems hold for a smaller, more difficult to compute $\maxcard$ as well. See Appendix for details.
\begin{equation}
    \maxcard = \min \left(\max_{\hypoth\in\allhypoth}\vert\{\regnoarg \!:\! \hypoth\in\regnoarg\}\vert, \max_{\regnoarg \in \allreg} \vert\{\subregnoarg \!:\! \subregnoarg \in \regnoarg\}\vert\right) + 1
\end{equation}
Note that each term is a property of the original region hypergraph $\hypergraphreg$ defined in \sref{sec:problem}: $\max_{\hypoth}\vert\{\regnoarg \! : \! \hypoth\in\regnoarg\}\vert$ is the maximum degree of any node, and $\max_{\regnoarg} \vert\{ \subregnoarg \! : \! \subregnoarg \in \regnoarg\} \vert$ bounds the maximum cardinality of hyperedges in $\hypergraphreg$.\footnote{It is precisely the maximum cardinality of any hyperedge if we grouped hypotheses into subregions in $\hypergraphreg$.}

\subsection{Relating \probacro and \alg}
How does the hypergraph capture our progress towards solving Problem~\eqref{eq:probstat}? Observing a set of test-outcomes $\subtestobs \subseteq \settestobs$ eliminates inconsistent hypotheses, and consequently downweights or eliminates (``cuts'') incident hyperedges (\figref{fig:hypergraphcut}). Analogous to the definition of $\version(\subtestobs)$ in \eqref{eq:edge_cons}, we define the set of hyperedges consistent with $\subtestobs$ by
\begin{equation}\edgeversion(\subtestobs)=\{\hyperedge\in\hyperedgeset : \forall (i,o)\in\subtestobs\;\forall \hypoth\in\hyperedge, \hypoth(i)=o\}\label{eq:edge_cons}\end{equation}
The following result guarantees that cutting all hyperedges is a necessary and sufficient condition for success, i.e., driving all uncertainty into a single decision region.

\begin{theorem}
  \label{theory:hyperedge_iff}  
  Suppose we construct a splitting hypergraph by drawing hyperedges of cardinality $\maxcard$ according to \eqref{eq:hyper}.  Let  $\subtestobs\subseteq\settestobs$ be a set of evidence. All consistent hypotheses lie in some decision region if and only if all hyperedges are cut, i.e.,
  $$\edgeversion(\subtestobs)=\emptyset\;\;\Leftrightarrow\;\;\exists \regnoarg:\version(\subtestobs)\subseteq\regnoarg$$
\end{theorem}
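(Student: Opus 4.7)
The plan is to prove both directions separately; the backward direction follows directly from the hyperedge definition, while the forward direction requires an explicit construction of a consistent hyperedge whose cardinality matches $\maxcard$. Let $d = \max_{\hypoth} |\{\regnoarg : \hypoth \in \regnoarg\}|$ and $s = \max_{\regnoarg} |\{\subregnoarg : \subregnoarg \in \regnoarg\}|$, so that $\maxcard = \min(d,s)+1$.

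For $(\Leftarrow)$, I assume $\version(\subtestobs) \subseteq \regnoarg$ for some region $\regnoarg$. Any hyperedge $\hyperedge \in \hyperedgeset$ must, by construction, contain some subregion $\subregnoarg \not\subseteq \regnoarg$; because all hypotheses inside a subregion share the same region memberships, this is equivalent to $\subregnoarg \cap \regnoarg = \emptyset$. Since $\version(\subtestobs) \subseteq \regnoarg$, every $\hypoth \in \subregnoarg$ must then be inconsistent with $\subtestobs$, forcing $\hyperedge \notin \edgeversion(\subtestobs)$.

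For $(\Rightarrow)$, I argue the contrapositive: assuming $\version(\subtestobs) \not\subseteq \regnoarg$ for every $\regnoarg$, I build a consistent hyperedge of cardinality $\maxcard$. Define $C = \{\subregnoarg : \subregnoarg \cap \version(\subtestobs) \neq \emptyset\}$, the subregions touched by the version space. The assumption forces $C \not\subseteq \regnoarg$ for every $\regnoarg$, as otherwise $\version(\subtestobs) \subseteq \bigcup_{\subregnoarg \in C} \subregnoarg \subseteq \regnoarg$. It then suffices to pick a multiset $S$ of cardinality at most $\maxcard$ drawn from $C$ whose distinct subregions are not collectively contained in any region, then pad to size $\maxcard$ by duplicating existing elements. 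I case split on which of $d$ or $s$ determines $\maxcard$. If $\maxcard = d+1$, I fix any $\hypoth_0 \in \version(\subtestobs)$, enumerate $R_0 = \{\regnoarg : \hypoth_0 \in \regnoarg\}$ with $|R_0| \leq d$, and for each $\regnoarg \in R_0$ pick a witness $\hypoth_\regnoarg \in \version(\subtestobs) \setminus \regnoarg$; the subregions of $\hypoth_0$ and of the $\hypoth_\regnoarg$ defeat every region $\regnoarg^*$ (by $\hypoth_{\regnoarg^*}$ if $\regnoarg^* \in R_0$, and by $\hypoth_0$ otherwise), giving $|S| \leq d+1$. If $\maxcard = s+1$, then either $|C| \geq s+1$, in which case any $s+1$ distinct elements of $C$ suffice (no region can hold more than $s$ subregions), or $|C| \leq s$, in which case $S = C$ itself meets the requirement by the global property already established.

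The main obstacle is the forward direction, and within it, hitting cardinality exactly $\maxcard$ while simultaneously preserving consistency and non-containment. The case split on $d$ versus $s$ is what delivers the right bound: the first-term construction is economical when consistent hypotheses belong to few regions, while the second-term argument is economical when regions cover few subregions. Padding by duplication then handles the exact-size requirement without disturbing either invariant, since it neither enlarges the support of $S$ nor introduces any subregion lacking a consistent hypothesis.
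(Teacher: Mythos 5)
Your proof is correct for the theorem as stated, but the forward direction takes a genuinely different route from the paper's. The paper first observes that cutting all hyperedges forces every set of at most $\maxcard$ consistent hypotheses into a common region, and then lifts this to all of $\version(\subtestobs)$ by an inductive lemma (if every $\beta$-subset of hypotheses shares a region, so does every $(\beta+1)$-subset), proved by a counting contradiction against the appendix quantity $\maxcardiff$; that machinery is what lets the paper's argument also cover the tighter cardinality $\maxcardiff$ rather than only $\min(d,s)+1$. You instead argue the contrapositive by explicitly exhibiting an uncut hyperedge of cardinality exactly $\maxcard$, case-splitting on which term attains the minimum: the anchor-plus-witnesses construction when $\maxcard=d+1$, and the pigeonhole on subregions-per-region (or taking $S=C$ when $|C|\le s$) when $\maxcard=s+1$, with duplication padding to reach exact cardinality. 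Both case analyses check out, and your version is shorter and self-contained, avoiding the $\maxcardiff$/$\maxcardas$ apparatus at the cost of being tied to the specific formula for $\maxcard$. One caveat, which you inherit from the paper rather than introduce: your witness hyperedge only guarantees that each of its subregions \emph{contains} a consistent hypothesis, whereas the literal definition \eqref{eq:edge_cons} declares an edge cut as soon as \emph{any} hypothesis in any of its subregions is inconsistent, so a partially eliminated subregion would already cut your witness. The intended, weight-based notion (an edge is cut when some subregion loses all of its mass) is what both your argument and the paper's actually use, and under that reading both proofs are sound.
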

Thus, the \probacro Problem~\eqref{eq:probstat} is equivalent to finding a policy of minimum cost that cuts all hyperedges.
This insight suggests a natural algorithm: select tests that cut as many edges as possible (in expectation). In the following, we formalize this approach.

\subsection{The Hyperedge Cutting (\alg) Algorithm}
Given the above construction, we define a suitable objective function whose maximization will ensure that we pick tests to remove hyperedges quickly, thus providing us with an algorithm that identifies a correct decision region. First, we define the weight of a subregion as the sum of hypothesis weights, $p(\subregnoarg) = \sum_{\hypoth \in \subregnoarg} p(\hypoth)$. We define the weight of a hyperedge $\hyperedge=\hyperedgeelements$ as $\edgeweightfunc(\hyperedge) = \prod_{i=1}^{\maxcard} \dist({\subreg{i}})$. More generally, we define the weight of a collection of hyperedges as $\edgeweightfunc(\{\hyperedge_1,\dots,\hyperedge_n\})= \sum_{l=1}^n\edgeweightfunc(\hyperedge_l)$. Now, given a pair of test/observation $(\actionitem,\observationitem)$, we can identify the set of inconsistent hypotheses
, which in turn implies the set of hyperedges that should be downweighted or removed. Formally, given a set of test/observation pairs $\subtestobs \subseteq \settestobs$, 
we define its utility $\fhec(\subtestobs)$ as
\begin{equation}\label{eq:HEC}
\fhec(\subtestobs) = \edgeweightfunc(\hyperedgeset)-\edgeweightfunc(\edgeversion(\subtestobs)).
\end{equation}
Thus $\fhec(\subtestobs)$ is the total mass of all the edges cut via observing set $\subtestobs$.

A natural approach to the \probacro Problem is thus to seek policies that maximize \eqref{eq:HEC} as quickly as possible. Arguably the simplest approach is a greedy approach that iteratively chooses the test that increases \eqref{eq:HEC} as much as possible, in expectation over test outcomes.

Formally, we define the \textit{expected marginal gain} of a test $\actionitem$  given evidence $\subtestobs \subseteq \settestobs$ as follows:
$$\marginalfHEC(\actionitem\!\mid\!\subtestobs) \!=\! \sum_{\hypoth} \dist(\hypoth\!\mid\!\subtestobs) \Bigl(\fhec(\subtestobs \cup \{(\actionitem,\hypoth(\actionitem))\})-\fhec(\subtestobs)\Bigr)$$
Thus, $\marginalfHEC(\actionitem\!\mid\!\subtestobs)$ quantifies, for test $\actionitem$, the expected reduction in hyperedge mass upon observing the outcome of the test. Hereby, the expectation is taken w.r.t.~the distribution over hypotheses conditioned on our evidence so far. It is apparent that all hyperedges are cut if and only if $\marginalfHEC(\actionitem\!\mid\!\subtestobs)=0$ for all tests $\actionitem\in\testset$.
Given this, our \alg Algorithm simply starts with $\subtestobs=\emptyset$. It then proceeds in an iterative manner, greedily selecting the test $\actionitem^*$ that maximizes the expected marginal benefit, $\actionitem^* = \arg\max_{\actionitem} \marginalfHEC(\actionitem \mid \subtestobs)$, observes the outcome $\hypoth(\actionitem^*)$ and adds the pair $(\actionitem^*,\hypoth(\actionitem^*))$ to $\subtestobs$. It stops as soon as all edges are cut (i.e., the marginal gain of all tests is 0).

\subsection{Theoretical Analysis}
The key insight behind our analysis is that the marginal gain $\marginalfHEC(\actionitem\!\mid\!\subtestobs)$ satisfies two properties: {\em adaptive monotonicity} and {\em adaptive submodularity}, introduced by \citet{golovin_adaptive_2011} and associated with certain sequential decision problems. Formally, adaptive monotonicity simply states that the benefit of each test is nonnegative, $\marginalfHEC(\actionitem\mid\subtestobs)\geq 0$ for all tests $\actionitem\in\testset$ and evidence $\subtestobs\subseteq\settestobs$. This is straightforward, since carrying out a test can never introduce hyperedges, but only remove them. The second, slightly more subtle property -- {\em adaptive submodularity} -- states that the marginal gain of any fixed test $\actionitem\in\testset$ can never increase as we gain additional evidence. Formally, whenever $\subtestobs\subseteq\subtestobs'\subseteq\settestobs$, it must hold that $\marginalfHEC(\actionitem\mid\subtestobs)\geq \marginalfHEC(\actionitem\mid\subtestobs')$. Those properties are formally established for our $\fhec$ objective and the associated marginal gain $\marginalfHEC$ in the following Theorem:
\begin{theorem}
\label{th:as}
The objective function $\falg$ defined in \eqref{eq:HEC} is adaptive submodular and strongly adaptive monotone.
\end{theorem}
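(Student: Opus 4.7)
For any $\subtestobs\subseteq\subtestobs'$ consistent with some hypothesis we have $\version(\subtestobs)\supseteq\version(\subtestobs')$, so each posterior subregion mass $\dist(\subreg{i}\cap\version(\cdot))$ is monotone non-increasing in the evidence. Since $\fhec$ is built from products of such masses, $\fhec(\subtestobs)\leq\fhec(\subtestobs')$ holds deterministically for every realized outcome---not merely in expectation---which is strong adaptive monotonicity.

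For \textbf{adaptive submodularity}, my plan is to first obtain pointwise submodularity via a tuple factorization and then lift to the adaptive setting by direct comparison of marginal gains. Expanding each edge weight $\prod_i \dist(\subreg{i}\cap\version(\subtestobs))=\sum_{\vec h\in \subreg{1}\times\cdots\times \subreg{\maxcard}}\prod_i \dist(h_i)\,\indicator{h_i\in\version(\subtestobs)}$ and summing over hyperedges yields
\[\fhec(\subtestobs) = \sum_{(\hyperedge,\vec h)}\Bigl(\prod_i \dist(h_i)\Bigr)\,\indicator{\exists i:\,h_i\notin\version(\subtestobs)}.\]
For any fixed true hypothesis $\phi$ with corresponding set of executed tests $\actionset\subseteq\testset$, this exhibits $\fhec$ as a weighted coverage function on $\testset$: the item $(\hyperedge,\vec h)$ with weight $\prod_i \dist(h_i)$ is covered precisely when $\actionset\cap D_{\vec h}^{\phi}\neq\emptyset$, where $D_{\vec h}^{\phi}=\{\actionitem\in\testset:\exists i,\,h_i(\actionitem)\neq\phi(\actionitem)\}$. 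Weighted coverage functions are classically monotone submodular, so $\fhec$ is pointwise monotone submodular.

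To lift this to the adaptive version I would compare $\marginalfHEC(\actionitem\mid\subtestobs)$ with $\marginalfHEC(\actionitem\mid\subtestobs')$ via the same decomposition. Each currently-consistent tuple contributes $\prod_i \dist(h_i)$ weighted by the posterior probability that $\actionitem$ invalidates at least one of its components; this probability equals $1$ when the $h_i$ disagree on $\actionitem$, and equals $1-\dist(h^{*}(\actionitem)=o_{\vec h}\mid\cdot)$ when they share a common value $o_{\vec h}$ on $\actionitem$. Grouping the latter tuples by common outcome $o$ and applying the identity $\dist(h^{*}(\actionitem)=o\mid\subtestobs)=\dist(\version(\subtestobs)\cap\{h:h(\actionitem)=o\})/\dist(\version(\subtestobs))$ reduces the desired inequality to a comparison of rational expressions in posterior subregion masses, which themselves are monotonically non-increasing in $\subtestobs$.

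\textbf{The main obstacle} is precisely the common-outcome case: per-tuple submodularity genuinely fails, since additional evidence can move $\dist(h^{*}(\actionitem)=o_{\vec h}\mid\cdot)$ in either direction. The proof must rely on aggregation---summing over all tuples that share a common outcome on $\actionitem$ together with the nonnegative slack from disagreement-regime tuples and from tuples whose consistency status changed between $\subtestobs$ and $\subtestobs'$---to absorb pointwise violations and establish $\marginalfHEC(\actionitem\mid\subtestobs)\geq\marginalfHEC(\actionitem\mid\subtestobs')$. This argument generalizes (and reduces to, when $\maxcard=2$ and each subregion is a singleton hypothesis) the adaptive submodularity proof for the $\mathrm{EC}^{2}$ objective from \citet{golovin_bayesian_noisy_obs}.
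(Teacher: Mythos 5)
Your strong adaptive monotonicity argument is fine and matches the paper's: evidence only shrinks the consistent mass of each subregion, so the uncut hyperedge weight can only decrease and $\fhec$ can only increase, pointwise in every realization. Your tuple factorization of the edge weights is also consistent with how the paper actually evaluates $\edgeweightfunc$ (products of consistent subregion masses), and the resulting pointwise submodularity of $\fhec$ as a weighted coverage function over tests is correct but, as you yourself note, insufficient for the adaptive claim.

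The genuine gap is that your plan stops exactly where the real work begins, and---more tellingly---nothing in your sketch ever uses the defining property of the hyperedge cardinality $\maxcard$. That property is not cosmetic: the appendix introduces $\maxcardas$ separately from $\maxcardiff$ precisely because the cardinality that suffices for the equivalence in Theorem~1 need not suffice for adaptive submodularity, so no aggregation argument that is agnostic to how $\maxcard$ relates to the region-overlap structure can succeed. In the paper's proof, one first reduces (using rationality of the prior, which is why the theorem carries that hypothesis) to eliminating a single hypothesis from one subregion $\subreg{l}$, splits the change in expected marginal gain into three terms, and finds that the only negative contribution comes from hyperedges avoiding $\subreg{l}$; this must be dominated by a positive contribution from hyperedges containing $\subreg{l}$. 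The domination rests on a combinatorial cancellation lemma for ``minimal'' hyperedges: every minimal separating hyperedge $\hyperedge$ that avoids $\subreg{l}$ has a $(\maxcard-1)$-element subset whose union with $\subreg{l}$ is again a separating hyperedge, and the proof of that lemma is a counting/contradiction argument that directly invokes the maximality in the definition of $\maxcardas$. This is the concrete content that your phrase ``aggregation \ldots to absorb pointwise violations'' would have to supply; without it (or an equivalent use of the cardinality condition) the inequality you are aggregating toward cannot be established, and indeed need not hold for a badly chosen $\maxcard$. The analogy to the $\mathrm{EC}^2$ proof hides this, because with disjoint regions and $\maxcard=2$ the cancellation is immediate.
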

Why are these properties useful? \citet{golovin_adaptive_2011} prove that for sequential decision problems satisfying adaptive monotonicity and adaptive submodularity, greedy policies are competitive with the optimal policy. 
In particular, as a consequence of Theorem~\ref{th:as} and Theorem 5.8 of \citet{golovin_adaptive_2011}, we obtain the following result for our \alg Algorithm:
\begin{theorem}
\label{th:performance}
Assume that the prior probability distribution $\dist$ on the set of hypotheses is rational. Then, the performance of $\policyHEC$ is bounded as follows
$$ \cost(\policyHEC) \leq (\maxcard\ln(1/p_{\min})+1) \cost(\policy^*),$$
where $p_{\min} = \min_{\hypoth\in \allhypoth} \dist(h)$.
\end{theorem}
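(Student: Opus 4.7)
The plan is to apply Theorem~5.8 of \citet{golovin_adaptive_2011} directly to the \alg policy. By \thmref{th:as}, $\falg$ is adaptive submodular and strongly adaptive monotone, and by \thmref{theory:hyperedge_iff}, the objective reaches its maximum value $Q := \edgeweightfunc(\hyperedgeset)$ precisely when $\version(\subtestobs) \subseteq \regnoarg$ for some $\regnoarg \in \allreg$. Hence the termination rule of \alg coincides with feasibility in Problem~\eqref{eq:probstat}, and the generic greedy analysis of \citet{golovin_adaptive_2011} yields
\[
\cost(\policyHEC) \;\leq\; \left(\ln(Q/\eta) + 1\right) \cost(\policy^*),
\]
where $\eta := \min\{Q - \falg(\subtestobs) : \subtestobs \subseteq \settestobs,\; Q - \falg(\subtestobs) > 0\}$ is the smallest positive residual mass. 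Rationality of $\dist$ ensures that $\falg$ takes only finitely many values, so this minimum is attained.

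It then remains to show $Q \leq 1$ and $\eta \geq \pmin^{\maxcard}$. For the first, I expand $\hyperedgeset$ into its constituent unordered multisets of size $\maxcard$ and dominate the sum by the corresponding sum over ordered tuples:
\[
Q \;=\; \sum_{\hyperedge\in\hyperedgeset} \prod_{i=1}^{\maxcard} \dist(\subreg{i}) \;\leq\; \sum_{(i_1,\dots,i_{\maxcard})\in\allsubreg^{\maxcard}} \prod_{j=1}^{\maxcard} \dist(\subreg{i_j}) \;=\; \Bigl(\sum_{\subregnoarg\in\allsubreg} \dist(\subregnoarg)\Bigr)^{\maxcard} \;=\; 1,
\]
using that subregions partition $\allhypoth$ so their probabilities sum to one. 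For the second, whenever $Q - \falg(\subtestobs) = \edgeweightfunc(\edgeversion(\subtestobs)) > 0$, at least one hyperedge $\hyperedge$ remains consistent, and its weight $\prod_{i=1}^{\maxcard} \dist(\subreg{i})$ is at least $\pmin^{\maxcard}$, since each subregion contains at least one hypothesis and therefore satisfies $\dist(\subregnoarg) \geq \pmin$.

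Combining the two estimates gives $\ln(Q/\eta) + 1 \leq \maxcard \ln(1/\pmin) + 1$, which yields the stated approximation. The proof is mostly a mechanical invocation of the off-the-shelf adaptive-submodular cover guarantee; the only subtlety is in identifying the correct $Q$ and $\eta$ for our multiset-based construction, which is where the multiset form of hyperedges (and the nonemptiness of every subregion) enters crucially, and where the rationality hypothesis on $\dist$ is used to guarantee that $\eta$ is a genuine minimum rather than just a positive infimum.
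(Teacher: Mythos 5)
Your proposal is correct and follows essentially the same route as the paper: both invoke Theorem~5.8 of \citet{golovin_adaptive_2011}, using adaptive submodularity and strong adaptive monotonicity from \thmref{th:as}, the self-certifying property via the equivalence in \thmref{theory:hyperedge_iff}, and the constants $Q \leq 1$ and $\eta \geq \pmin^{\maxcard}$. Your write-up is in fact slightly more careful than the paper's, since you justify $Q \leq 1$ by dominating the multiset sum with the ordered-tuple sum and explain where rationality of $\dist$ enters.
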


For the special case of disjoint regions (i.e., the ECD Problem, corresponding to $\maxcard=2$), our objective $\fhec$ is equivalent to the objective function proposed by \citet{golovin_bayesian_noisy_obs}, and hence our Theorem~\ref{th:performance} strictly generalizes their result. Furthermore, in the special case where each test can have at most two outcomes, and we set $\maxcard=1$, the \alg Algorithm is equivalent to the Generalized Binary Search algorithm for the ODT problem, and recovers its approximation guarantee.

\section{Efficient Implementation}\label{sec:implementation}

\begin{figure*}[t!]
\centering
 \subfigure[\emph{All Multisets}]{
\includegraphics[width=.29\textwidth]{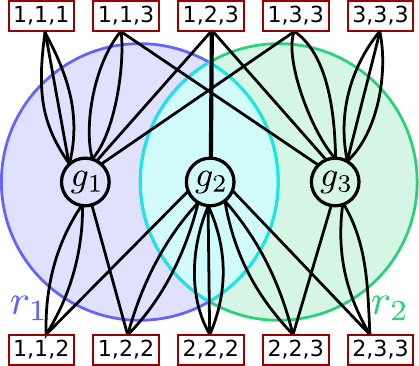}
 \label{fig:chp_all}
 }\qquad
\subfigure[\emph{$\vert\subregsettwo\vert=1$ sets removed}]{
\includegraphics[width=.29\textwidth]{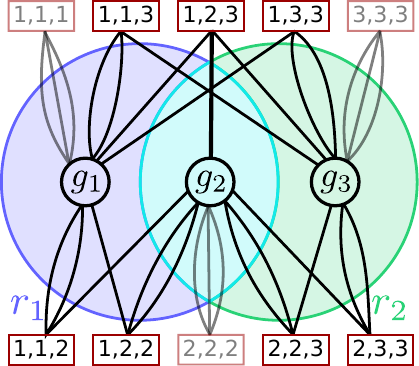}
 \label{fig:chp_rm_one}
 } \qquad
\subfigure[\emph{$\vert\subregsettwo\vert=2$ sets removed}]{
\includegraphics[width=.29\textwidth]{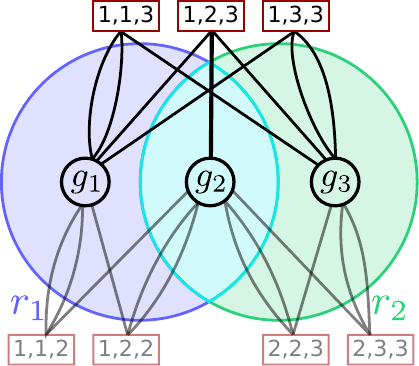}
 \label{fig:chp_rm_two}
 }\vspace{-2mm}
 \caption{\footnotesize A depiction of our algorithm as hyperedge multisets.~\subref{fig:chp_all} The equivalent hyperedges of $\comsympoly_{3}(\allsubreg)$.~\subref{fig:chp_rm_one} First iteration \algoref{alg:he_remove_regions} which removes all $\vert \subregsettwo \vert = 1$ (light edges) by subtracting $\subreg{1}\comsympoly_{2}( \{\subreg{1}\}) + \subreg{2}\comsympoly_{2}( \{\subreg{2}\}) + \subreg{3}\comsympoly_{2}( \{\subreg{3}\})$.~\subref{fig:chp_rm_two} Second iteration of \algoref{alg:he_remove_regions} which removes all $\vert \subregsettwo \vert = 2$ (light edges) by subtracting $\subreg{1}\subreg{2}\comsympoly_{1}( \{\subreg{1},\subreg{2}\}) + \subreg{2}\subreg{3}\comsympoly_{1}( \{\subreg{2},\subreg{3}\})$
 } \label{fig:hec_chp_alg} 
\end{figure*}

\begin{algorithm}[t!]
  \caption{Hyperedge Weight}
  \label{alg:he_remove_regions}
  \begin{algorithmic}
    \Procedure{Hyperedge Weight}{$\allhypoth, \maxcard$} 
    \State Compute subregions $\allsubreg$ from $\allhypoth$
    \State $W \gets \comsympoly_{\maxcard}(\allsubreg)$

    \State Initialize queue $\textrm{Q}_{1}$ with every subregion $\subreg{ } \in \allsubreg$

    \ForAll{$\cardvar \leq \maxcard$}
    \ForAll{$\subregsettwo_{\cardvar} \in Q_{\cardvar}$}
    \If{$\exists \reg{ } \text{ s.t. } \forall \hypoth \in \subregsettwo_{\cardvar}, \ \hypoth \in \reg{ } $}
    \State $W \gets W - \prod_{\subreg{ } \in \subregsettwo_{\cardvar}} p(\subregnoarg) \comsympoly_{\maxcard - \cardvar}(\subregsettwo_{\cardvar}) $
    \State Add all supersets of $\subregsettwo_{\cardvar}$ to $\textrm{Q}_{\cardvar+1}$
    \EndIf
    \EndFor
    \EndFor
    \State \textbf{return} $W$
    \EndProcedure
  \end{algorithmic}
\end{algorithm}

Our \alg algorithm computes $\marginalfHEC(\actionitem\!\mid\!\subtestobs)$ for every test in $\testset$, and greedily selects one at each time step. Naively computing this quantity involves constructing the splitting hypergraph $\hypergraphsplit$ for every possible observation, and summing the edge weights. This is computationally expensive, as constructing the graph requires enumerating every multiset of order $\maxcard$ and checking if any region contains them all, resulting in a runtime of $\bigo{|\allsubreg|^\maxcard}$. We can, however, quickly prune checks and iteratively consider multisets of growing cardinality during our computation by utilizing the following fact:

\begin{proposition}
  \label{prop:subregionsets}
  A set of subregions $\subregset$ shares a region only if all subsets $\subregset' \subset \subregset$ also share that region.
\end{proposition}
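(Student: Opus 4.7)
The plan is to prove this by a direct containment argument, since the statement is essentially a monotonicity property of the ``shares a region'' relation. First I would unpack the definitions: a set of subregions $\subregset = \{\subreg{1}, \dots, \subreg{m}\}$ shares a region $\regnoarg$ precisely when $\forall \subregnoarg \in \subregset, \forall \hypoth \in \subregnoarg, \hypoth \in \regnoarg$; equivalently, $\bigcup_{\subregnoarg \in \subregset} \subregnoarg \subseteq \regnoarg$.

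Next I would observe that if $\subregset' \subset \subregset$, then $\bigcup_{\subregnoarg \in \subregset'} \subregnoarg \subseteq \bigcup_{\subregnoarg \in \subregset} \subregnoarg$. Combining with the hypothesis that $\bigcup_{\subregnoarg \in \subregset} \subregnoarg \subseteq \regnoarg$, we get $\bigcup_{\subregnoarg \in \subregset'} \subregnoarg \subseteq \regnoarg$, which is exactly the statement that $\subregset'$ shares region $\regnoarg$.

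There is no real obstacle here; the proposition is a straightforward consequence of the fact that a subset of a region-contained collection is still region-contained. The only care needed is to make the quantifiers over subregions versus hypotheses explicit, using the paper's convention that $\hypoth \in \subregset$ means $\exists \subregnoarg \in \subregset$ with $\hypoth \in \subregnoarg$. Once unpacked this way, the result follows in a single line. The value of the proposition lies not in its depth but in its algorithmic use: it justifies the pruning step of \algoref{alg:he_remove_regions}, which only grows a candidate set $\subregsettwo_{\cardvar+1}$ from $\subregsettwo_{\cardvar}$ when the smaller set already shares a region, avoiding the full $\bigo{|\allsubreg|^\maxcard}$ enumeration of multisets.
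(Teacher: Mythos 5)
Your proof is correct: the paper states this proposition without any explicit proof (treating it as immediate), and your one-line containment argument --- that $\subregset$ sharing $\regnoarg$ means $\bigcup_{\subregnoarg \in \subregset} \subregnoarg \subseteq \regnoarg$, which is inherited by any $\subregset' \subset \subregset$ --- is exactly the intended justification. Nothing is missing, and your remark about its role in pruning \algoref{alg:he_remove_regions} matches the paper's use of the result.
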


\subsection{Utilizing Complete Homogeneous Symmetric Polynomials}
Our general strategy will be to compute the sum of weights over \emph{all} multisets of cardinality $\maxcard$, and subtract those that correspond to a shared region. To do so efficiently, we identify algebraic structure in computing a sum of multisets, where a multiset corresponds to a product. Namely, it is equivalent to computing a complete homogeneous symmetric polynomial.

For any $\subregset \subseteq \allsubreg$ and cardinality $\cardvar$, we define $\allgroups_\cardvar(\subregset)$ as all multisets over groups $\subregset$ of cardinality $\cardvar$. Unlike hyperedges, these multisets can share a region. Formally
\begin{align*}
  \allgroups_\cardvar(\subregset) &= \left\{ \hyperedgeelementscardvar \subseteq \allgroups\right\}
\end{align*}
Recall that $\edgeweightfunc(\allgroups_\cardvar(\subregset)) = \sum_{\allsubreg_\cardvar(\subregset) } \prod_{\subregnoarg} \dist(\subregnoarg)$. Computing $\edgeweightfunc(\allgroups_\cardvar(\subregset))$ can be performed efficiently as this quantity is exactly equivalent to the \emph{complete homogeneous symmetric polynomial} (CHP) of degree $\cardvar$ over $\subregset$. We will briefly review a well known variant of the Newton-Girard formulae which will make an efficient algorithm for computing $\edgeweightfunc(\allgroups_\cardvar(\subregset))$ clear.

Define any set of variables $ \mathbf{x} = \{x_1, \cdots, x_n\}$. 
\begin{align*}
  \powersum_i(\mathbf{x}) &= \sum_{x \in \mathbf{x}} x^i\\
  \comsympoly_i(\mathbf{x}) &= \sum_{l_1 + \dots l_{n} = i; l_j \geq 0} \prod_{x_j \in \mathbf{x}} x_j^{l_j}
\end{align*}
Here $\powersum_i$ is the i-th \emph{power sum}, and $\comsympoly_i$ is the i-th complete homogeneous symmetric polynomial.

We have the identity~\citep{macdonald_1998_symmetric,seroul_2000_programming}:
\begin{align*}
  \comsympoly_i(\mathbf{x}) &= \frac{1}{i} \sum_{j=1}^i \comsympoly_{i - j}(\mathbf{x}) \powersum_{j}(\mathbf{x})
\end{align*}
Thus, we iteratively compute $\comsympoly_{1}(\subregset) \! \dots \! \comsympoly_{\cardvar}(\subregset)$ to compute $\edgeweightfunc(\allgroups_\cardvar(\subregset)) = \comsympoly_{\cardvar}(\subregset)$ with runtime $\bigo{\cardvar |\subregset|}$.

We now turn our attention to efficiently computing the weight of all multisets that correspond to subregions encapsulated by a region. Let $\subregsettwo$ be a set (not multiset) of subregions that shares a region. Formally: 
\begin{align*}
    \subregsettwo &= \{ \subreg{1} \dots \subreg{{\cardvar} } \} \qquad \cardvar \leq \maxcard, \nexists \regnoarg  \text{ s.t. }  \subregsettwo \subseteq \regnoarg
\end{align*}
We compute the term corresponding to $\subregsettwo$ we want to subtract from $\comsympoly_{\maxcard }(\allsubreg)$ when $\subregsettwo$ shares a region . To avoid double counting, we want the polynomial to include $\prod_{\subregnoarg \in \subregsettwo} p(\subregnoarg)$ as a factor, i.e. if we think of a hyperedge as a product, we force one link to each element of $\subregsettwo$.
\begin{align*}
  \edgeweightfunc(\subregsettwo) &=  \prod_{\subregnoarg \in \subregsettwo} p(\subregnoarg)  \sum_{l_1 + \dots l_{\cardvar} = \maxcard-\cardvar; l_i > 0} p(\subreg{1})^{l_1} \dots p(\subreg{_{\cardvar}})^{l_\cardvar} \\
  &=  \prod_{\subregnoarg \in \subregsettwo} p(\subregnoarg) \comsympoly_{\maxcard - \cardvar}(\subregsettwo)
\end{align*}
Using this, we compute $\edgeweightfunc(\hyperedgeset) = \comsympoly_{\maxcard }(\allsubreg) - \sum_{\subregsettwo \subseteq \allsubreg} \edgeweightfunc(\subregsettwo)$ by finding every set $\subregsettwo \subseteq \allsubreg$ that shares a region. Furthermore, we can utilize \propref{prop:subregionsets} to prune sets, and only consider $\subregsettwo_{\cardvar+1}$ which are supersets of any $\subregsettwo_{\cardvar}$. The algorithm is detailed in \algoref{alg:he_remove_regions}, and depicted in \figref{fig:hec_chp_alg}.

Additionally, we note that region assignments do not change as observations are received. In practice, we find all sets of subregions that share a region once. At each time step, we need only sum over the terms corresponding to remaining hypotheses.

Note that in the worst case, this algorithm still has complexity $\bigo{|\allsubreg|^\maxcard}$. This occurs when many, at least $\maxcard$, subregions share a single region. The complexity is then controlled by how many distinct subregions a single region can be shattered into, and the largest number of regions a single hypothesis can belong to. However, for many practical problems, we might expect many regions to be separated, e.g., when $|\allreg| \gg \maxcard$. In this case, \algoref{alg:he_remove_regions} will be significantly more efficient. 

Finally, we note that we can utilize an \emph{accelerated adaptive greedy} algorithm applicable to all adaptive submodular functions, which directly uses the diminishing returns property to skip reevaluation of actions~\citep{golovin_adaptive_2011}.

%
%
%
%


\newcommand{\mldata}{\texttt{MovieLens 100k}\xspace}
\newcommand{\rbdata}{\texttt{Robot}\xspace}

\newcommand{\GBS}{GBS\xspace}
\newcommand{\GBShec}{GBS-HEC\xspace}
\newcommand{\EC}{EC2\xspace}
\newcommand{\EChec}{EC2-HEC\xspace}
\newcommand{\VOI}{VoI\xspace}

\section{Experiments}
In this section, we empirically evaluate \alg on the two applications - approximate comparison-based learning and touch based localization with a robotic end effector. 

We compare \alg with five baselines. The first two are variants of algorithms for the specialized versions of the \probacro problem described earlier - generalized binary search~\citep{nowak_noisy_gbs} and equivalence class edge cutting~\citep{golovin_bayesian_noisy_obs}. For generalized binary search (\GBS), we assign each hypothesis to its own decision region, and run \alg on this hypothesis-region assignment until only one hypothesis remains. To apply equivalence class edge cutting (\EC), decision regions must be disjoint. Thus, we randomly assign each hypothesis to one of the decision regions that it belongs to, and run \EC until only one of these new regions remains. For each of these, we also run a slightly modified version, termed \GBShec and \EChec respectively, which selects tests based on these algorithms, but terminates once all hypotheses are contained in one decision region in the original \probacro problem (i.e. when the \alg termination condition is met).


The last baseline is a classic heuristic from decision theory: myopic value of information (\VOI)~\citep{howard66voi}. We define a utility function $U(\hypoth,\regnoarg)$ which is 1 if $\hypoth\in\regnoarg$ and 0 otherwise. The utility of $\version(\subtestobs)$ corresponds to the maximimum expected utility of any decision region, i.e. the expected utility if we made a decision now. \VOI greedily chooses the test that maximizes (in expectation over observations) the gain in this utility. Note that if we could solve the intractable problem of nonmyopically optimizing \VOI (i.e., look ahead arbitrarily to consider outcomes of sequences of tests), we could solve the \probacro problem optimally. In some sense, \alg can be viewed as a surrogate function for nonmyopic value of information.


\begin{figure*}
\centering
\subfigure[\emph{\mldata} ($\maxcard = 3$)]{
\vtop{
  \vskip0pt
  \hbox{%
\includegraphics[width=.302\textwidth]{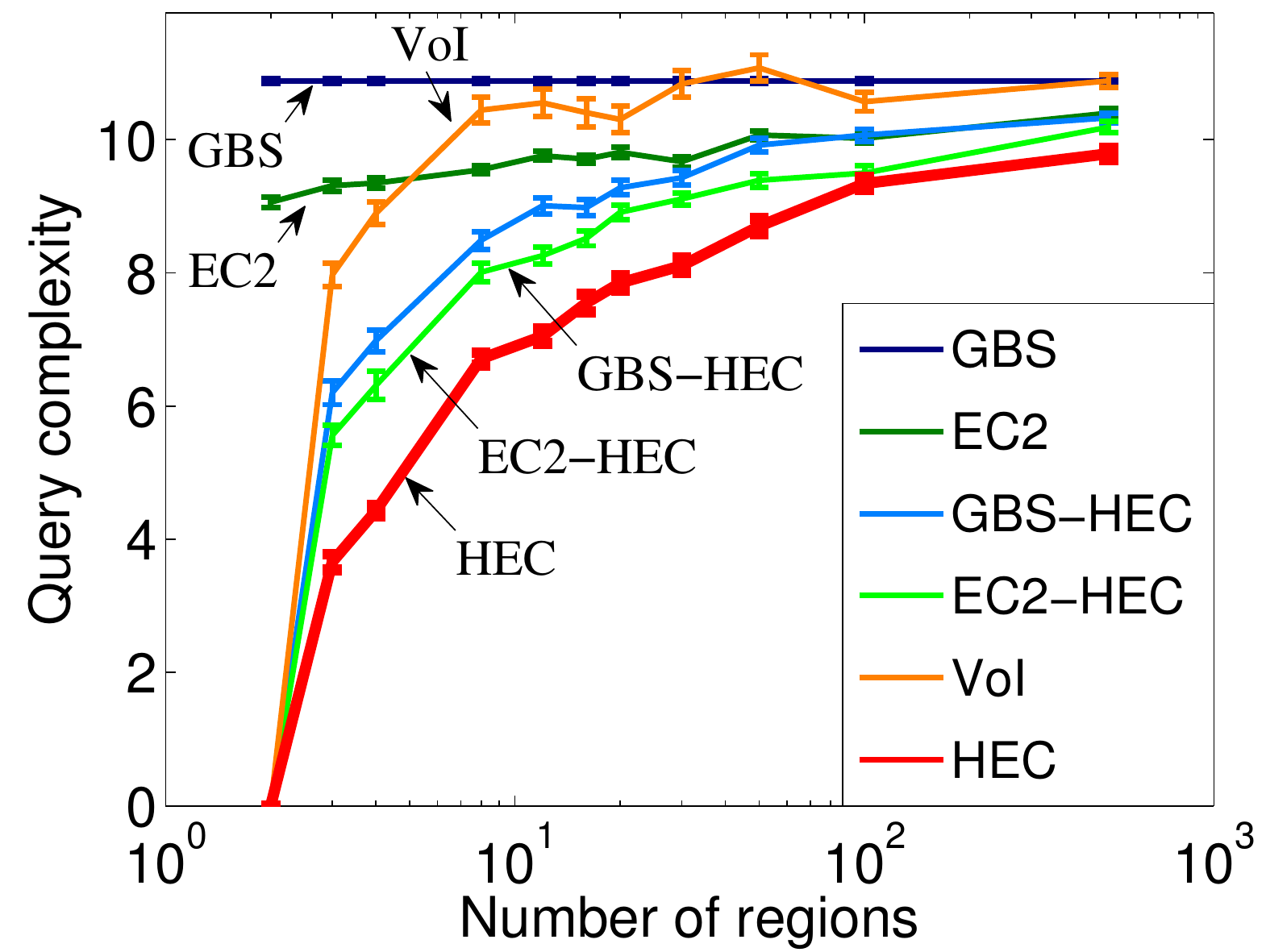}
  }%
}
 \label{fig:ml-plot-standalone}
 }
 \subfigure[\emph{\mldata} ($\numreg=12$)]{

\vtop{
  \vskip0pt
  \hbox{%
\includegraphics[width=.3\textwidth]{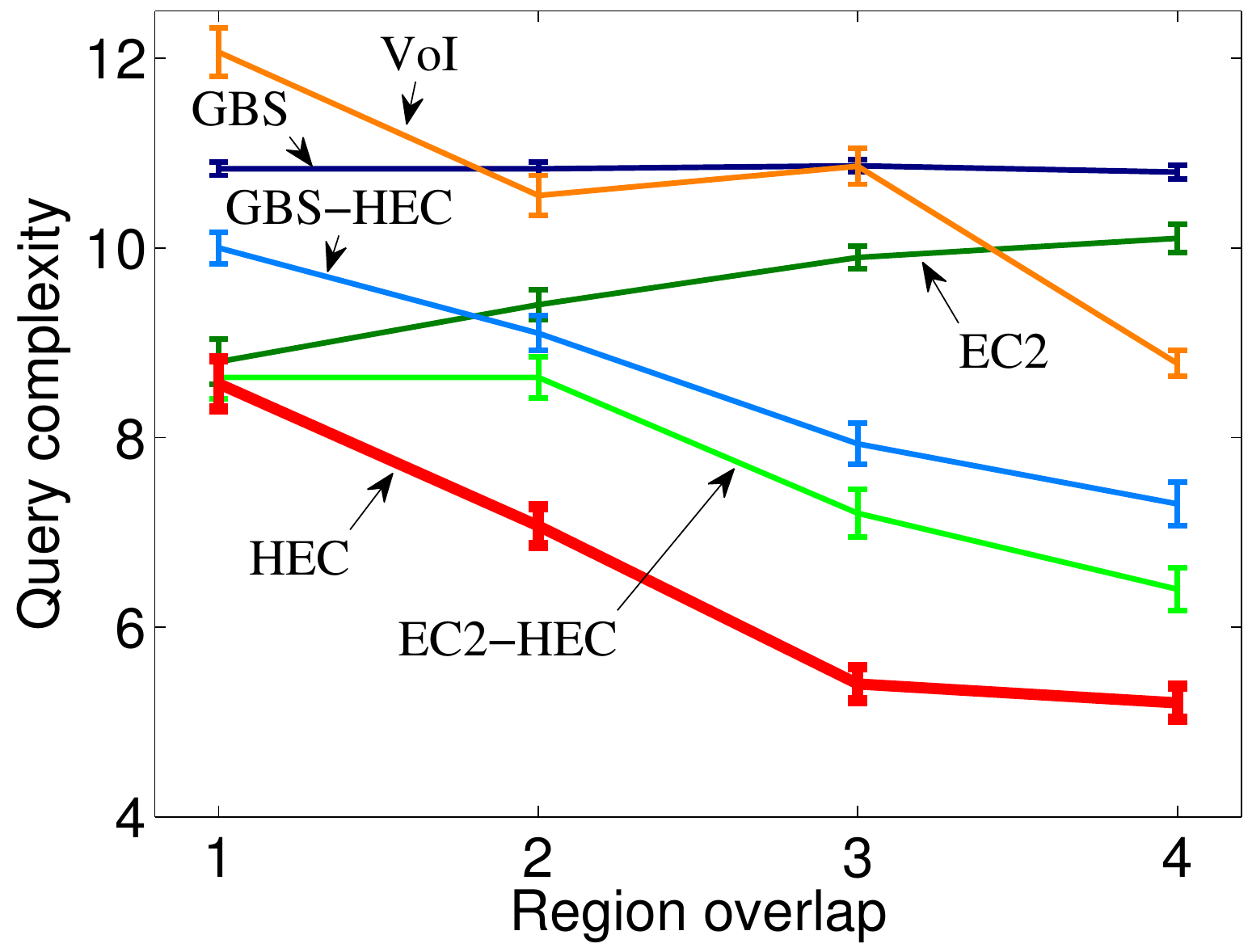}
  }%
}
 \label{fig:ml-barchart-standalone}
 } 
\subfigure[\emph{Robotic manipulation simulation}]{
\vtop{
  \vskip0pt
  \hbox{%
\includegraphics[width=.28\textwidth, trim=-3 -29 -3 1, clip=true]{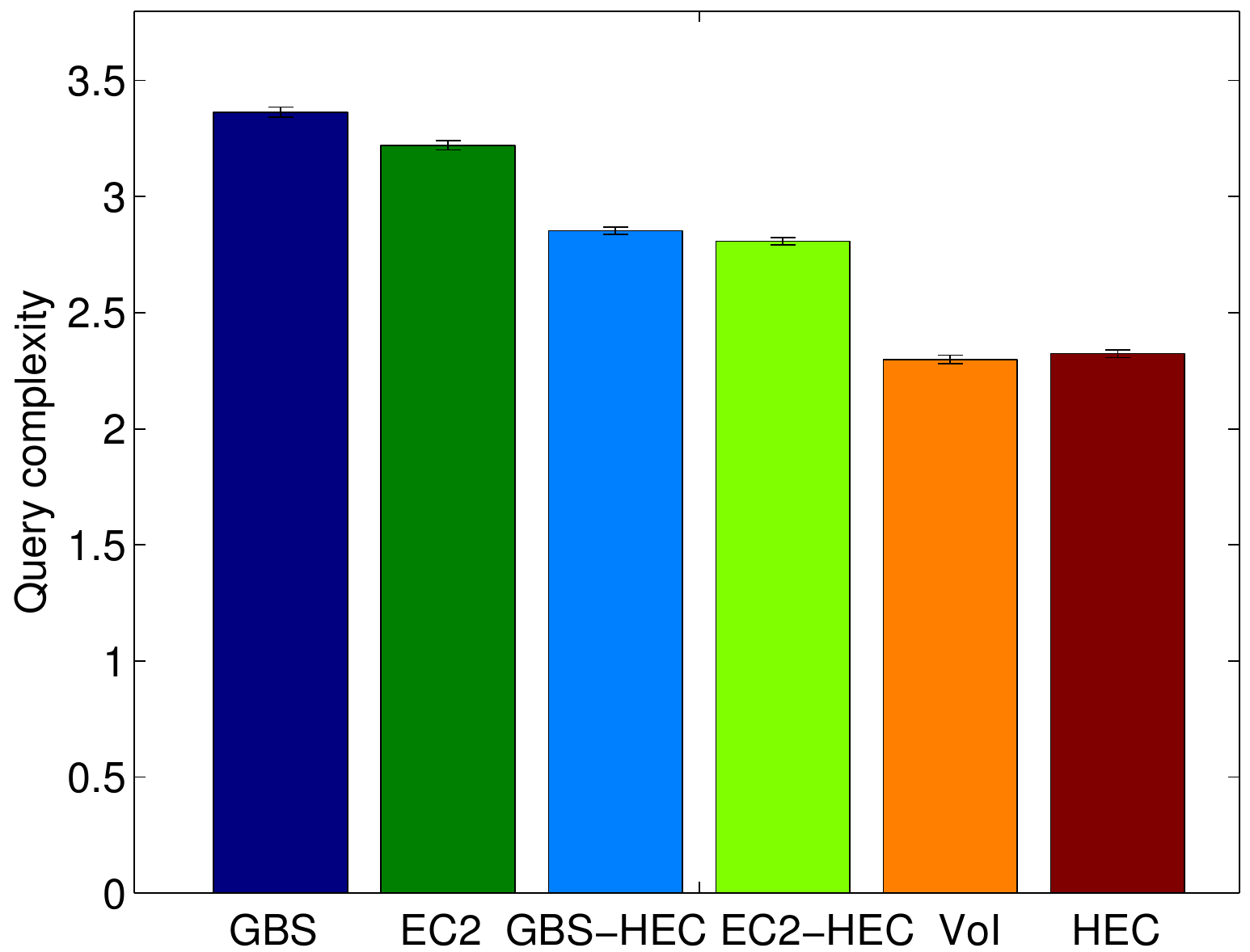}
  }%
}
 \label{fig:robot-barchart}
 }
 \vspace{-2mm}
 \caption{\footnotesize  Results on \mldata and \texttt{Robot} experiments.~\subref{fig:ml-plot-standalone} Performance as we vary the number of regions $\numreg$.~\subref{fig:ml-barchart-standalone} Performance as we vary the cardinality $\maxcard$.~\subref{fig:robot-barchart} Average performance of different algorithms across button push instances.} \label{fig:vis-robot} 
\end{figure*} 

\subsection{Approximate comparison-based learning}
We evaluate \alg on the \mldata\footnote{http://www.grouplens.org/datasets/movielens/} dataset, which consists of 1 to 5 ratings of 1682 movies from 943 users. We partition movies into decision regions using these ratings, with the goal of recommending any movie in a decision region. In order to get a similarity measurement between movies, we map them into a 10-dimensional feature space by computing a low-rank approximation of the user/rating matrix through SVD. We then use k-means to partition the set of movies into $\numreg$ (non-overlapping) clusters, corresponding to decision regions. Each movie is then assigned to the $\alpha$ closest cluster centroids. See \figref{fig:mlvisualization} for an illustration. A test corresponds to comparing two movies, an observation to selecting one of the two, and consist hypotheses are those which are closer to the selected movie (euclidean distance in 10-dimensional feature space).

Each experiment corresponds to sampling one movie as the ``true'' movie. As the number of regions increases, the size of each decision region shrinks. The size of a decision region determines how close our solution is to this (exact) target hypothesis. As a result, the problem requires the selected movie be closer to the true target, at the expense of increased query complexity. \figref{fig:ml-plot-standalone} shows the query complexity of different algorithms as a function of the number of regions, with the cardinality of the \alg hypergraph fixed to $\maxcard=3$ (i.e., each hypothesis belongs to two decision regions). An extreme case is when there are only two regions and all hypotheses belong to both regions, giving a query complexity of 0. Other than that, we see that \alg performs consistently better than other methods (e.g., to identify the true region out of 8 regions, it takes on average 6.7 queries for \alg, as opposed to 8 queries for \EChec, 8.5 queries for \GBShec, and 10.3 queries for \VOI).

\begin{figure}[t!]
\centering
\subfigure[\emph{Partitions ($\maxcard=2$)}]{
\includegraphics[width=.22\textwidth]{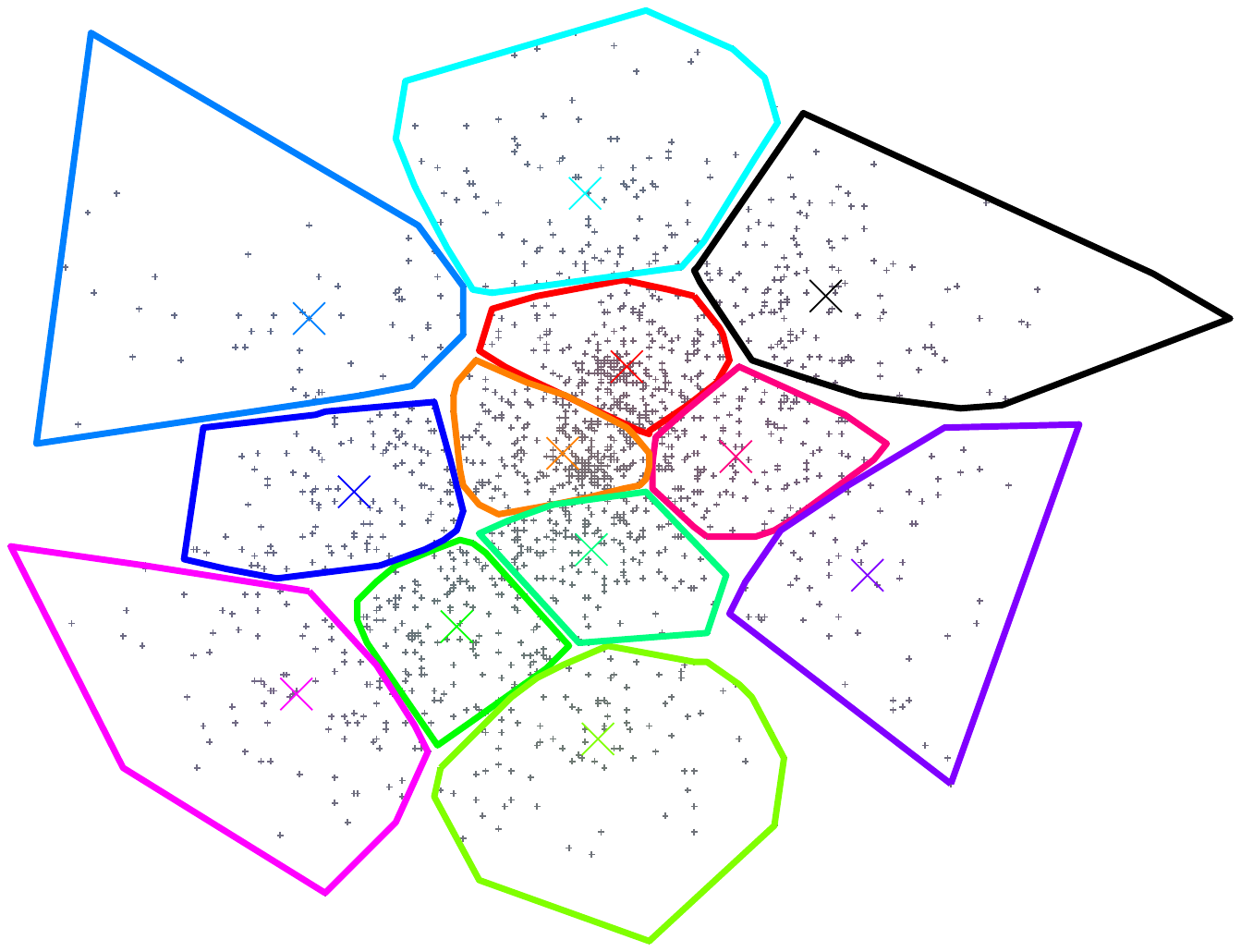}
 \label{fig:ml-v1}
 }
\subfigure[\emph{Decision regions ($\maxcard=3$)}]{
\includegraphics[width=.22\textwidth]{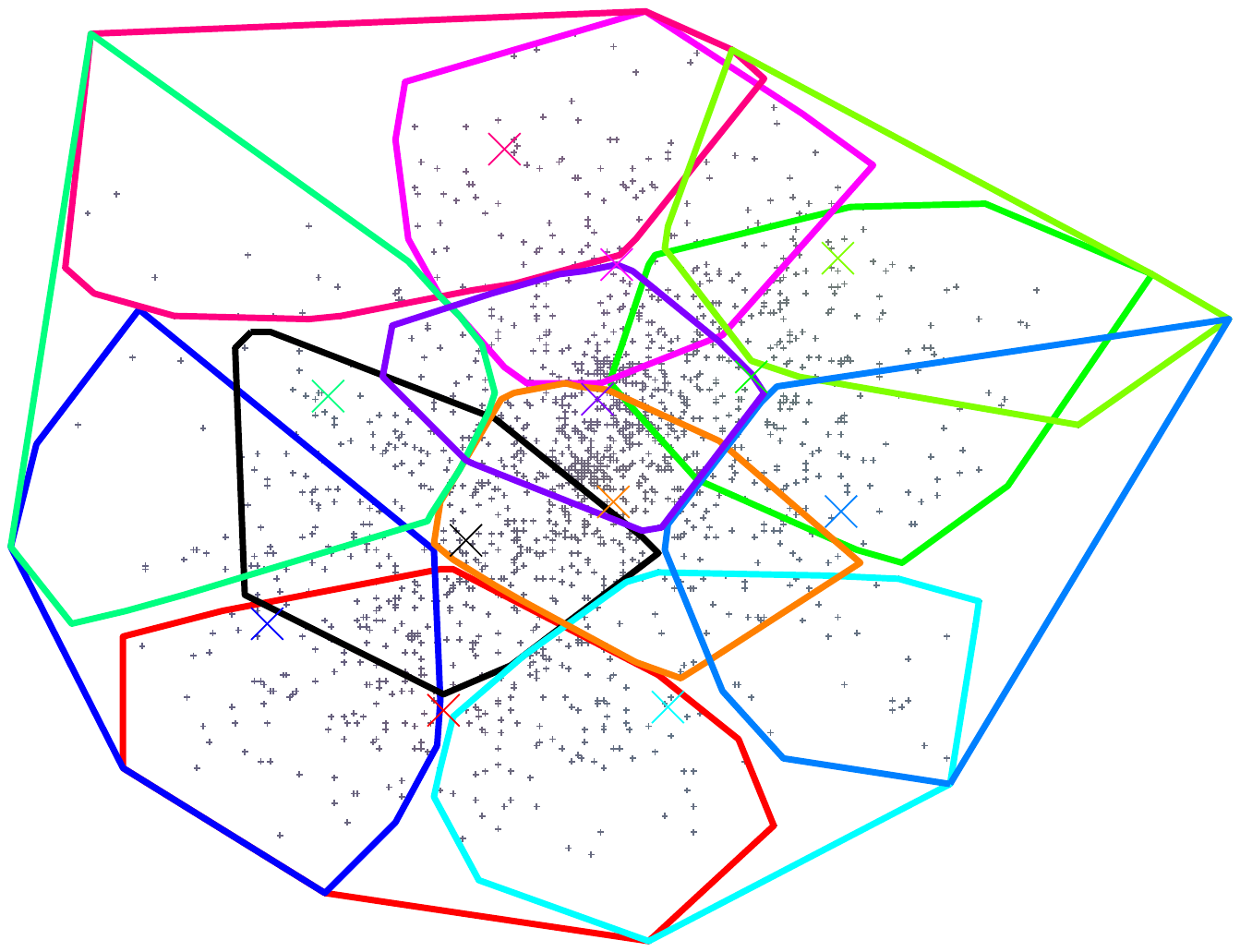}
 \label{fig:ml-v2}
 }
 \caption{\footnotesize  A 2-d illustration of (overlapping) decision regions for \mldata experiments. Dots represent movies, cross markers represent cluster centroids, and colored polygons represent decision region boundaries.~\subref{fig:ml-v1} Movies are partitioned into 12 disjoint clusters.~\subref{fig:ml-v2} Each movie is assigned to the two closest centroids.} \label{fig:mlvisualization}
\end{figure}

To see how the cardinality and region overlap influence performance, we compare the query complexity of different algorithms by varying the number of regions each hypothesis is assigned to. If we assign more regions to a hypothesis, then the search result is allowed to be further away from the true target, and thus the number of queries required for approximated search should be smaller. \figref{fig:ml-barchart-standalone} demonstrates such an effect. We fix the number of clusters to 12, and vary the number of assigned regions (and thus the hyperedge cardinality) from 1 to 4 ($\maxcard$ from 2 to 5, respectively). We see that higher cardinality enables \alg to saves more queries. For $\maxcard = 5$, it takes \alg 5.3 queries to identify a movie, whereas \VOI, \GBShec, and \EChec took 8.8, 7.4, and 6.4 queries, respectively. Additionally, \tabref{tab:mlruntime} shows the running time of \alg for these instances. We see that the accelerated implementation described in \sref{sec:implementation} enables \alg to run efficiently with reasonable hyperedge cardinality on this data set.


\begin{table}[t]
\label{tab:mlruntime} 
\begin{center}
  \begin{tabular}{ccccc}
    \toprule
$\maxcard$  &  2  & 3 & 4 & 5 \\ 
\midrule
t(\alg) & 0.026s & 0.071s   & 2.5s & $<$ 2min   \\
    \bottomrule
  \end{tabular}
  \vspace{-2mm}
\end{center}
\caption {\footnotesize Running time of \alg on \mldata with different cardinality $\maxcard$ ($\numreg$ = 12)} 
\end{table}


\begin{figure*}[t!]
\centering
 \subfigure[Hypotheses]{
\includegraphics[width=.3\textwidth, trim=200 220 200 90,clip=true]{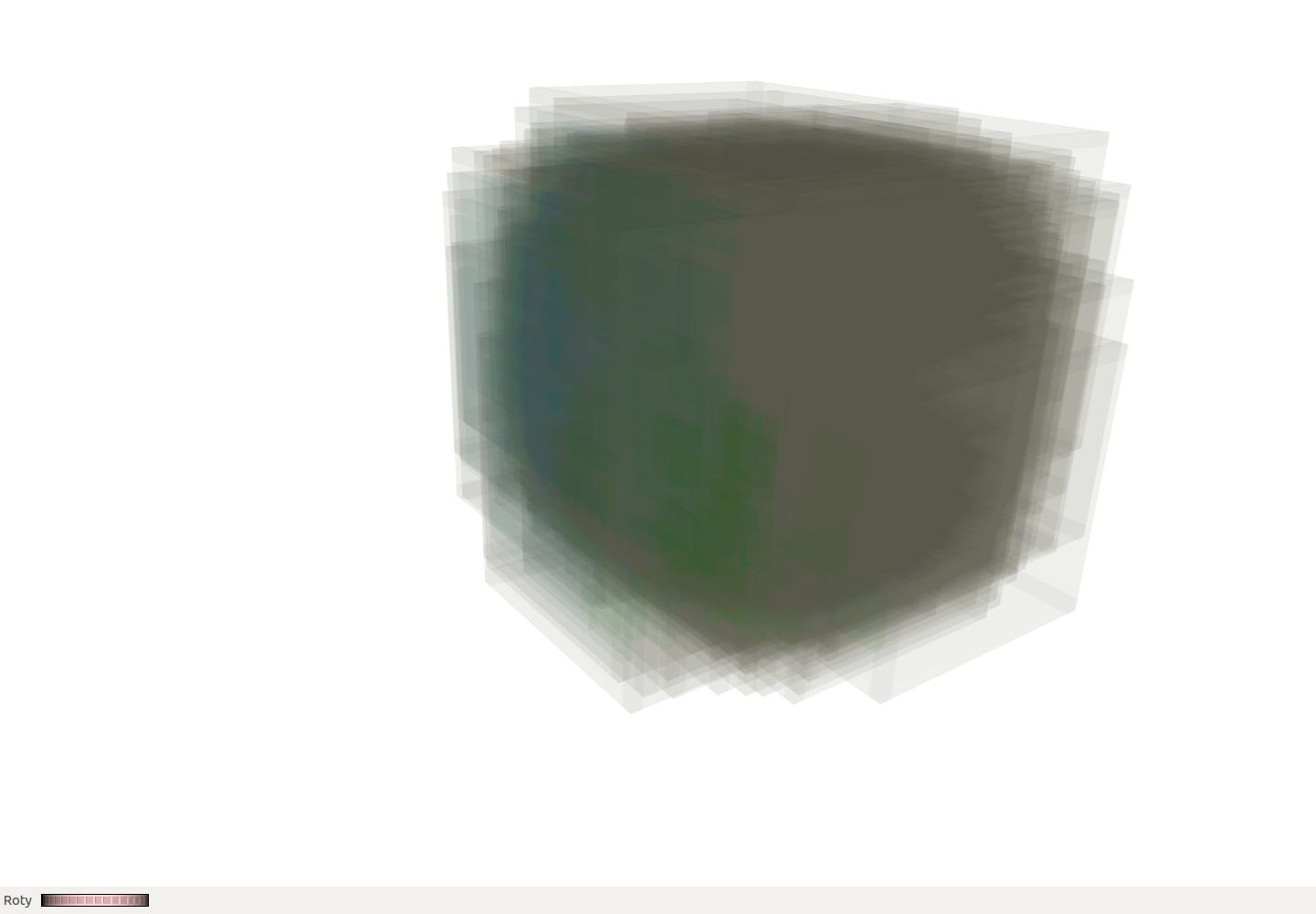}
 \label{fig:rob_parts}
 }
 \subfigure[A decision region]{
\includegraphics[width=.3\textwidth, trim=200 220 200 90,clip=true]{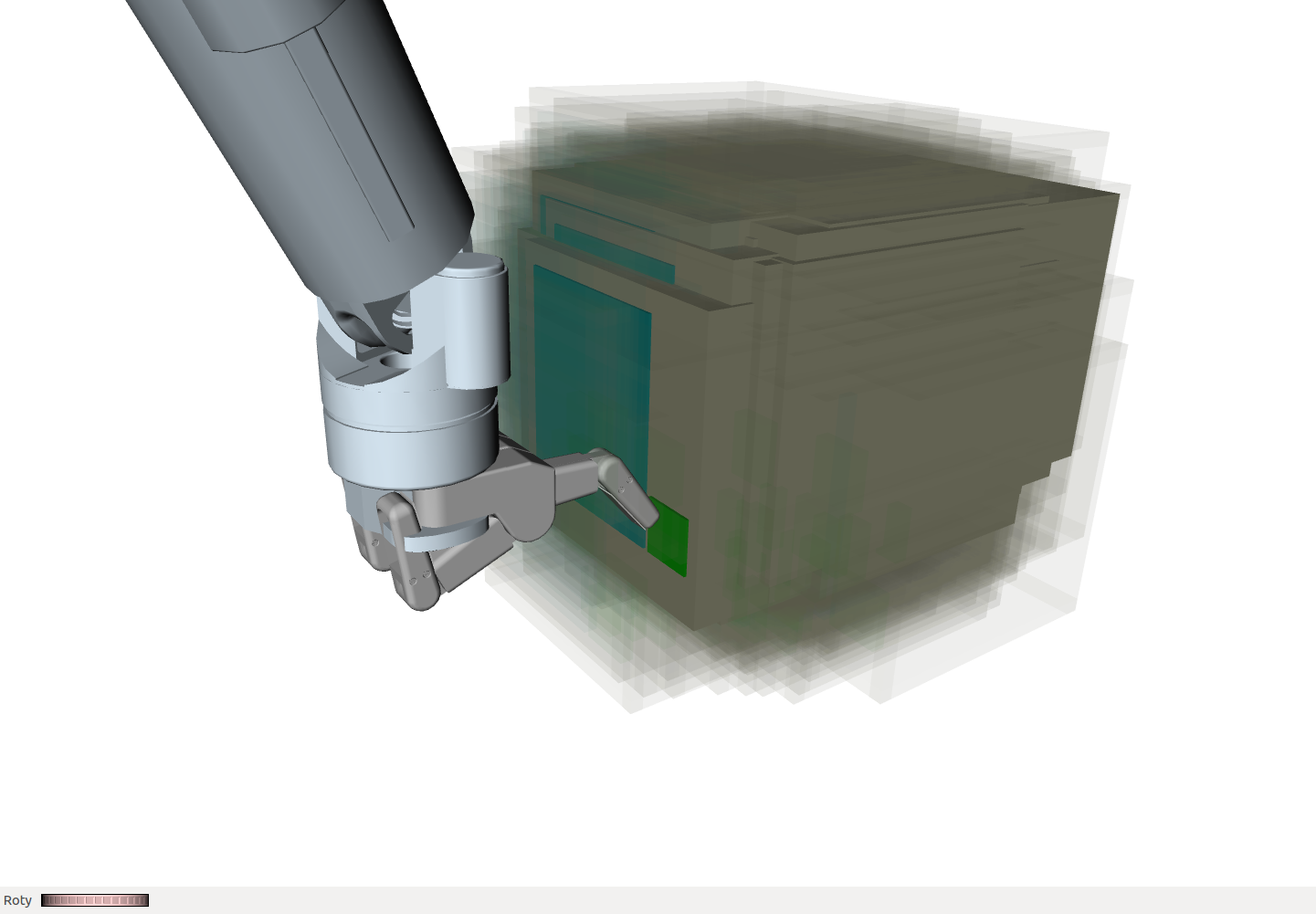}
 \label{fig:rob_region}
 }\vspace{-2mm}
 \subfigure[Two regions]{
\includegraphics[width=.3\textwidth, trim=200 220 200 90,clip=true]{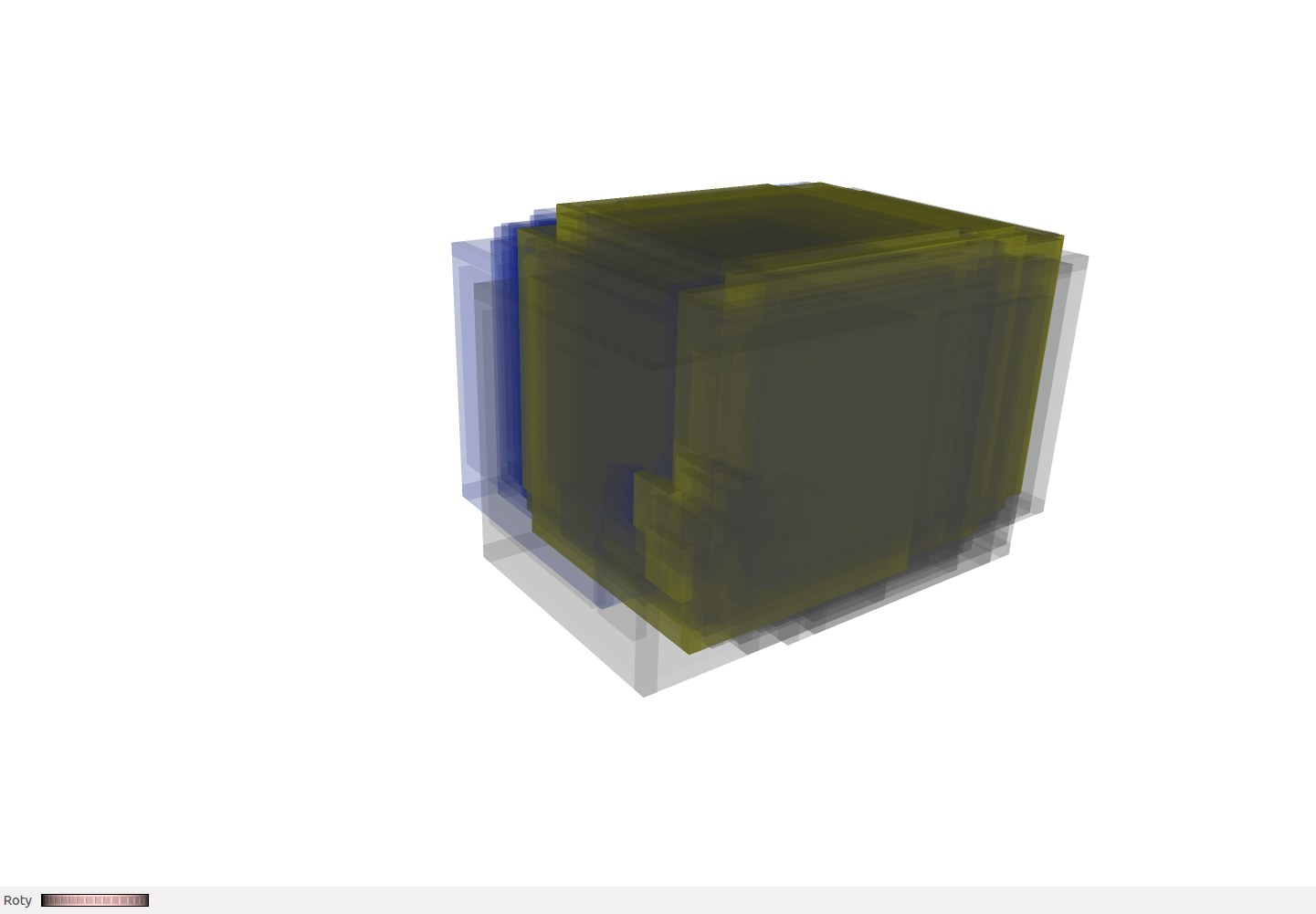}
 \label{fig:rob_region_overlap}
 }
 \caption{\footnotesize Touch based localization for pushing the button of a microwave. Given hypotheses over object location~\subref{fig:rob_parts}, decision actions are generated. The corresponding decision regions are computed by forward simulating to find hypotheses for which it would succeed~\subref{fig:rob_region}. Decision regions will overlap. In~\subref{fig:rob_region_overlap}, we see two regions (blue and grey) and their overlap (yellow).} 
 \label{fig:rob_expr_setup}
\end{figure*} 

\subsection{Touch Based localization}

We evaluate \alg on a simple robotic manipulation example. Our task is to push a button with the finger of a robotic end effector. Given a distribution over object location, we generate a set of decisions, corresponding to the end effector going to a particular pose and moving forward in a straight line. Each of these decisions will succeed on a subset of hypotheses, corresponding to a decision region. Decision regions may overlap, as a button can be pushed with many decision actions. See \figref{fig:rob_expr_setup}.

All hypotheses are not contained in a single decision region, so we perform tests to reduce uncertainty. These tests correspond to \emph{guarded moves}~\citep{will_1975_guarded}, where the end effector moves along a path until contact is sensed. After sensing contact, hypotheses are updated by eliminating object locations which could not have produced contact, e.g., if they are far away. Our goal is to find the shortest sequence of tests such that after performing them, there is a single button-push decision that would succeed for all remaining hypotheses.

Given some object location $X_{s}$, we generate an initial set of 2000 hypotheses $\allhypoth$ by sampling from $N(\mu, \Sigma)$ with $\mu = X_{s}$, and $\Sigma$ a diagonal matrix with $\Sigma_{xx} = \Sigma_{yy} = \Sigma_{zz} = 0.04$. The robot generates 50 decision regions by picking different locations and simulating the end effector forward, and noting which object poses it would succeed on. Hypotheses range from being in zero decision regions to 6, giving us a cardinality $\maxcard = 7$. For tests, the robot generates 150 guarded moves by sampling a random start location and orientation. 

We conduct experiments on 10 random environments, and randomly sample 100 hypotheses to be the ``true'' object location (for producing observations during execution), for a total of 1000 experiments. \figref{fig:robot-barchart} shows the query complexity of different algorithms averaged over these instances. We see that \alg performs well, outperforming \GBS, \GBShec, \EC, and \EChec handily. Note that myopic \VOI performs essentially the same as HEC on these experiments. This is likely due to the short horizon, where 2-3 actions were usually sufficient for reducing uncertainty to a single decision region. We would expect that for longer horizons, myopic \VOI would not perform as well.

\section{Conclusions}
In this paper, we have addressed the problem of active learning in order to facilitate decision making. We defined the Decision Region Determination (\probacro) problem, requiring that at the end of information gathering, all remaining hypotheses are confined within a single decision region (\emph{i.e.}, do not require further distinction from a decision making point of view). To address this problem, we proposed an equivalent representation in terms of a hypergraph. We prove that eliminating all edges in this hypergraph is a necessary and sufficient condition for success, suggesting a natural objective function. We show that this objective satisfies adaptive monotonicity and adaptive submodularity. This insight enabled us to prove that a greedy policy for removing hyperedges (\alg) has an approximation guarantee compared to the optimal policy. Finally, we note that at each iteration, we compute a particular polynomial, and can utilize a faster algorithm through efficient computations of complete homogeneous symmetric polynomials.

While our algorithm enables us to tackle problems of reasonable size, our computation is still exponential in hyperedge cardinality $\maxcard$. Additionally, our current scheme assumes \emph{noise-free} observations, where a hypothesis deterministically maps a test to an observation. We hope to alleviate these limitations in future work.



\subsubsection*{Acknowledgements}\label{sec:acknowledgements}

This work was supported in part by the Intel Embedded Computing ISTC, NSF Grant No. 0946825, NSF-IIS-1227495, DARPA MSEE FA8650-11-1-7156, ERC StG 307036, and a Microsoft Research Faculty Fellowship.
%

\bibliography{refs}
\bibliographystyle{abbrvnat}


\newcommand{\inreg}[1]{n_{#1}}
\newcommand{\inreghat}[1]{\widehat{n}_{#1}}
\newcommand{\numwithobs}[1]{n^{#1}}
\newcommand{\numwithobshat}[1]{\widehat{n}^{#1}}
\newcommand{\inregandobs}[2]{n_{#1}^{#2}}
\newcommand{\inregandobshat}[2]{\widehat{n}_{#1}^{#2}}
\newcommand{\totalhypoths}{N}
\newcommand{\totalhypothshat}{\widehat{N}}

\newcommand{\edgermind}{l}
\newcommand{\inregrm}{\inreg{\edgermind}}
\newcommand{\subregrm}{\subreg{\edgermind}}

\newcommand{\hyperedgesetk}{\hyperedgeset_\edgermind}
\newcommand{\hyperedgesetnok}{\overline{\hyperedgeset_{\edgermind}}}

\newcommand{\minhyperedgeset}{\hyperedgeset^{\min}}
\newcommand{\minhyperedgesetk}{\minhyperedgeset_\edgermind}
\newcommand{\minhyperedgesetnok}{\overline{\minhyperedgeset_{\edgermind}}}
\newcommand{\nominhyperedgeset}{\hyperedgeset^{\overline{\min}}}
\newcommand{\nominhyperedgesetk}{\nominhyperedgeset_\edgermind}
\newcommand{\nominhyperedgesetnok}{\overline{\nominhyperedgeset_{\edgermind}}}
\newcommand{\numk}{ {\vert\hyperedge_\edgermind\vert} }

\newcommand{\sumhe}{\sum_{\hyperedge \in \hyperedgeset}}
\newcommand{\sumhek}{\sum_{\hyperedge \in \hyperedgesetk}}
\newcommand{\sumhenok}{\sum_{\hyperedge \in \hyperedgesetnok}}
\newcommand{\summinhenok}{\sum_{\hyperedge \in \minhyperedgesetnok}}
\newcommand{\sumnominhenok}{\sum_{\hyperedge \in \nominhyperedgesetnok}}
\newcommand{\sumallobs}{\sum_{\observationitem \in \observationset}} 
\newcommand{\sumobsnotc}{\sum_{\observationitem \in \observationset \backslash c}} 

\newcommand{\prodedge}{\prod_{i \in \hyperedge}}
\newcommand{\prodedgenok}{\prod_{i \in \hyperedge, i \neq l}}

\clearpage
\onecolumn

\section{Appendix}
In this section, we provide proofs for theorems stated throughout the paper.

\subsection{$\maxcard$ for bounds}
We start by showing that for a properly defined $\maxcard$, the \probacro problem is solved ($\version(\subtestobs)\subseteq \regnoarg$) if and only if the \alg objective is maximized. However, we sometimes require a slightly greater $\maxcard$ to ensure the objective $\fhec$ is adaptive submodular. We define these below.

Let $\regset$ be a set of regions, the length of which is related to $\maxcard$. To get equivalence of the \probacro and \alg, we require that for every region in $\regset$, there is some hypothesis in all but one region of $\regset$.
\begin{align*}
  \regsetiff &= \argmax_{\regset} \vert\regset\vert \quad \text{ s.t. }\forall \reg{ } \in \regset, \exists \hypoth : \hypoth \notin \reg{ }, \hypoth \in \regset \backslash \reg{ } \\
  \maxcardiff &= \vert\regsetiff\vert
\end{align*}

Sometimes, this is not sufficient for adaptive submodularity. For this, we also require that there is some hypothesis in every region of $\regset$, and we also add one to the length of $\regset$.
\begin{align*}
  \regsetas &= \argmax_{\regset} \vert\regset\vert \quad \text{ s.t. } \circled{1} \exists \tilde{\hypoth} \in \regset \quad \circled{2} \forall \reg{ } \in \regset, \exists \hypoth : \hypoth \notin \reg{ }, \hypoth \in \regset \backslash \reg{ } \\
  \maxcardas &= \vert\regsetas\vert + 1
\end{align*}

Before moving on, we prove that $\maxcardas \geq \maxcardiff$. 
\begin{proposition}
  \label{prop:maxcard_as_iff}
  $\maxcardas \geq \maxcardiff$
\end{proposition}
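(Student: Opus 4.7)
The plan is to prove the inequality by direct construction: given any maximizer $\regsetiff$ for the ``iff'' optimization, I will exhibit a feasible (but not necessarily optimal) candidate for the ``as'' optimization of size $\maxcardiff - 1$, from which $\maxcardas = |\regsetas| + 1 \geq \maxcardiff$ follows immediately. This reduces everything to a short combinatorial check, so the main obstacle is just parsing the quantifiers correctly — in particular, reading ``$\hypoth \in \regset \backslash \reg{}$'' as ``$\hypoth$ lies in every region of $\regset \backslash \{\reg{}\}$'' (intersection), which matches the narrative description that some hypothesis is in all but one region.

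Let $\regsetiff = \{\reg{1}, \dots, \reg{\maxcardiff}\}$ achieve the maximum in the definition of $\maxcardiff$. Pick an arbitrary element, say $\reg{\maxcardiff}$, and set $\regset' = \regsetiff \setminus \{\reg{\maxcardiff}\}$, so that $|\regset'| = \maxcardiff - 1$. I will then verify the two defining conditions of $\regsetas$ for $\regset'$, which suffices to conclude $|\regsetas| \geq |\regset'|$.

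For condition \circled{1} (the existence of a common hypothesis): applying the ``iff'' property of $\regsetiff$ to the region $\reg{\maxcardiff}$ yields a hypothesis $\tilde{\hypoth}$ with $\tilde{\hypoth} \notin \reg{\maxcardiff}$ and $\tilde{\hypoth}$ lying in every region of $\regsetiff \setminus \{\reg{\maxcardiff}\} = \regset'$. So $\tilde{\hypoth}$ is the required common hypothesis. For condition \circled{2}: for any $\reg{i} \in \regset'$, the ``iff'' property of $\regsetiff$ provides a hypothesis $\hypoth_i \notin \reg{i}$ that lies in every region of $\regsetiff \setminus \{\reg{i}\}$; since $\regset' \setminus \{\reg{i}\} \subseteq \regsetiff \setminus \{\reg{i}\}$, the same $\hypoth_i$ also lies in every region of $\regset' \setminus \{\reg{i}\}$, as required.

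Hence $\regset'$ is feasible for the ``as'' optimization, giving $|\regsetas| \geq \maxcardiff - 1$ and therefore $\maxcardas = |\regsetas| + 1 \geq \maxcardiff$. The only real work is reading off the two witnesses $\tilde{\hypoth}$ and $\hypoth_i$ from the ``iff'' property, so there is no serious obstacle once the definitions are unpacked.
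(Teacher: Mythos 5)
Your proof is correct and is essentially the paper's own argument: the paper's proof splits into two cases depending on whether $\regsetiff$ already admits a common hypothesis, and its second case is exactly your construction (drop one region from $\regsetiff$ and use the displaced hypothesis to witness condition \circled{1}). You streamline this by observing that the construction works unconditionally, at the cost of losing the slightly stronger bound $\maxcardas \geq \maxcardiff + 1$ the paper gets in its first case, which is not needed for the stated proposition.
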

\begin{proof}
  There are two cases:
  \begin{enumerate}
    \item $\exists \hypoth \in \regsetiff$. In this case, $\regsetas = \regsetiff$ and $\maxcardas = \vert\regsetas\vert + 1 = \maxcardiff + 1$.
    \item $\not\exists \hypoth \in \regsetiff$. Define $\widetilde{\regset} = \regsetiff \setminus \reg{ }$ for some $\reg{ } \in \regsetiff$. We know by definition of $\regsetiff$ that $\exists \hypoth \in \widetilde{\regset}$. Additionally, we know by definition of $\maxcardiff$ that $\forall \reg{ } \in \widetilde{\regset}, \exists \hypoth, \hypoth \notin \reg{ }, \hypoth \in \regsetiff \setminus \reg{ }$, so it follows that $\hypoth \in \widetilde{\regset} \setminus \reg{ }$. Therefore, we know $\widetilde{\regset}$ satisfies the constraints for $\regsetas$, and $\maxcardas \geq \vert\widetilde{\regset}\vert+1 = \vert\regsetiff\vert = \maxcardiff$.
  \end{enumerate}
\end{proof}

Our algorithm actually utilizes $\displaystyle \maxcard = \min \left(\max_{\hypoth\in\allhypoth}\vert\{\regnoarg:\hypoth\in\regnoarg\}\vert, \max_{\regnoarg \in \allreg} \vert\{\subregnoarg : \subregnoarg \in \regnoarg\}\vert\right) + 1$. We briefly show that each of these also upper bound $\maxcardas$.

\begin{proposition}
  \label{prop:maxcard_maxreg_bound}
  $\max_{\hypoth\in\allhypoth}\vert\{\regnoarg:\hypoth\in\regnoarg\}\vert+1 \geq \maxcardas$
\end{proposition}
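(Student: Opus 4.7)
The plan is to unpack the definition of $\maxcardas$ and exhibit a single hypothesis whose region-degree is at least $\vert\regsetas\vert$. Recall that $\maxcardas = \vert\regsetas\vert + 1$ where $\regsetas$ is a maximum-cardinality set of regions satisfying both (1) there exists a hypothesis $\tilde{\hypoth}$ lying in every region of $\regsetas$ (my reading of ``$\exists \tilde{\hypoth} \in \regset$'' in this context, consistent with the usage ``$\hypoth \in \regset \setminus \reg{}$'' in condition (2)), and (2) for every $\reg{} \in \regsetas$ some hypothesis is in all other regions of $\regsetas$ but not in $\reg{}$. Only condition (1) will be needed for this proposition; condition (2) is what separates $\regsetas$ from a trivial set.

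The argument proceeds in one short step. By condition (1), there is a hypothesis $\tilde{\hypoth}$ that lies in every region in $\regsetas$. Therefore the set $\{\regnoarg : \tilde{\hypoth} \in \regnoarg\}$ contains all of $\regsetas$, giving
\begin{equation*}
\max_{\hypoth \in \allhypoth} \vert\{\regnoarg : \hypoth \in \regnoarg\}\vert \;\geq\; \vert\{\regnoarg : \tilde{\hypoth} \in \regnoarg\}\vert \;\geq\; \vert\regsetas\vert.
\end{equation*}
Adding $1$ to both sides and applying the definition $\maxcardas = \vert\regsetas\vert + 1$ yields the claim.

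There is essentially no obstacle; the proposition is an immediate consequence of condition (1) in the definition of $\regsetas$. The only thing to watch is the semantic convention that ``$\hypoth \in \regset$'' means $\hypoth$ lies in \emph{every} region of the set $\regset$ (equivalently, in the intersection $\bigcap_{\reg{} \in \regset} \reg{}$), which is exactly what makes condition (1) nontrivial and what the proof exploits.
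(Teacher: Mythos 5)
Your proof is correct and is essentially identical to the paper's: both observe that condition $\circled{1}$ forces some hypothesis $\tilde{\hypoth}$ to lie in every region of $\regsetas$, so $\vert\regsetas\vert \leq \max_{\hypoth\in\allhypoth}\vert\{\regnoarg:\hypoth\in\regnoarg\}\vert$, and adding $1$ gives the bound on $\maxcardas$. Your reading of the ``$\hypoth \in \regset$'' convention matches the paper's intended semantics.
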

\begin{proof}
 Note that condition $\circled{1}$ in $\regsetas$ bounds $\vert\regsetas\vert$ by $\max_{\hypoth\in\allhypoth}\vert\{\regnoarg:\hypoth\in\regnoarg\}\vert$. The result follows.
\end{proof}
 
\begin{proposition}
  \label{prop:maxcard_maxsubreg_bound}
  $\max_{\regnoarg \in \allreg} \vert \{\subregnoarg : \subregnoarg \in \regnoarg\}\vert+ 1 \geq \maxcardas$
\end{proposition}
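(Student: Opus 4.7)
The plan is to exhibit, inside some single region of $\regsetas$, at least $|\regsetas|$ distinct subregions; this immediately gives $\max_{\regnoarg \in \allreg} \vert\{\subregnoarg : \subregnoarg \in \regnoarg\}\vert \geq |\regsetas|$, and adding one yields the claim, since $\maxcardas = |\regsetas|+1$.

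First I would unpack the two witnesses provided by the definition of $\regsetas$. Condition $\circled{1}$ supplies a ``central'' hypothesis $\tilde{\hypoth}$ contained in every region of $\regsetas$; let $\tilde{\subregnoarg}$ be its subregion. Condition $\circled{2}$ supplies, for each $\regnoarg \in \regsetas$, a hypothesis $\hypoth_{\regnoarg}$ with $\hypoth_{\regnoarg} \notin \regnoarg$ but $\hypoth_{\regnoarg}$ in every region of $\regsetas \setminus \{\regnoarg\}$; let $\subregnoarg_{\regnoarg}$ be its subregion.

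Next I would verify that the collection $\{\tilde{\subregnoarg}\} \cup \{\subregnoarg_{\regnoarg} : \regnoarg \in \regsetas\}$ consists of pairwise distinct subregions. By definition, two hypotheses lie in the same subregion if and only if they have identical region-membership patterns across $\allreg$. But the membership patterns of $\tilde{\hypoth}$ and of the various $\hypoth_{\regnoarg}$ already differ when restricted to $\regsetas$: $\tilde{\hypoth}$ belongs to every region of $\regsetas$, while $\hypoth_{\regnoarg}$ is missing from exactly the region $\regnoarg$. Hence all $|\regsetas|+1$ of these subregions are distinct.

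Finally I would fix an arbitrary $\regnoarg^* \in \regsetas$ and count the subregions contained in it. By construction $\tilde{\subregnoarg} \in \regnoarg^*$ (since $\tilde{\hypoth} \in \regnoarg^*$), and for every $\regnoarg \in \regsetas \setminus \{\regnoarg^*\}$ we have $\subregnoarg_{\regnoarg} \in \regnoarg^*$ (since $\regnoarg^* \in \regsetas \setminus \{\regnoarg\}$, so $\hypoth_{\regnoarg} \in \regnoarg^*$). This produces $|\regsetas|$ distinct subregions inside the single region $\regnoarg^*$, so $\max_{\regnoarg \in \allreg} \vert\{\subregnoarg : \subregnoarg \in \regnoarg\}\vert \geq |\regsetas|$, and adding $1$ gives the claim. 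No step is really hard; the only mild subtlety is confirming that differing region-membership patterns force the subregions to be distinct, which is immediate from how subregions were defined in Section~\ref{sec:results}.
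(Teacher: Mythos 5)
Your proof is correct and follows essentially the same route as the paper's: fix one region $\regnoarg^*\in\regsetas$ and exhibit $\vert\regsetas\vert$ distinct subregions inside it, namely the subregion of the hypothesis from condition $\circled{1}$ plus the $\vert\regsetas\vert-1$ subregions of the hypotheses from condition $\circled{2}$ for the other regions. You merely spell out the distinctness argument (differing membership patterns on $\regsetas$ force distinct subregions) more explicitly than the paper does.
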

\begin{proof}
    Let $\regnoarg$ be an element of $\regsetas$. By definition, it is required that at least $\vert\regsetas\vert$ different subregions $\subregnoarg_{1} \cdots \subregnoarg_{\vert\regsetas\vert}$ be in that region - one which is in every other region in $\regsetas$ to satisfy condition $\circled{1}$, and $\vert\regsetas\vert-1$ which are in all but one of the $\regsetas-1$ other regions to satisfy condition $\circled{2}$. The result follows.
\end{proof}

Thus, we can utilize $\displaystyle \maxcard = \min \left(\max_{\hypoth\in\allhypoth}\vert\{\regnoarg:\hypoth\in\regnoarg\}\vert, \max_{\regnoarg \in \allreg} \vert \{\subregnoarg : \subregnoarg \in \regnoarg \}\vert\right) + 1$ and apply the proofs using cardinality at least $\maxcardas$ and $\maxcardiff$. While our bounds and algorithm are better if we knew the correct $\maxcardas$ to use, finding that value is itself hard to compute - thus, our implementation uses the value defined in \sref{sec:results} and copied above.

\subsection{\thmref{theory:hyperedge_iff}: Equivalence of \probacro and \alg}
\begin{proof}
  We first prove that if all $\hypoth$ are contained in one region, then all edges are cut, i.e. $\exists \regnoarg:\version(\subtestobs)\subseteq\regnoarg \Rightarrow \hyperedgesetobs=\emptyset$. This is by construction, since a hyperedge $\hyperedge \in \hyperedgesetobs$ is only between subregions (or hypotheses) that do not share any regions. More concretely, our definition of $\hyperedge$ requires $\not\exists \regnoarg \text{ s.t. } \forall \hypoth \in \hyperedge : \hypoth \in \regnoarg$. Since all remaining nodes $\version(\subtestobs) \subseteq \regnoarg$, there will be no such such set of hypotheses.

  Next, we prove that if all edges are removed, then all $\hypoth$ are contained in one region, i.e., $\hyperedgesetobs=\emptyset \Rightarrow \exists \regnoarg:\version(\subtestobs) \subseteq\regnoarg$. Clearly, if we set $\vert\version(\subtestobs)\vert \leq \maxcard$, this condition would be met - $\hyperedgesetobs$ would check every subset of $\version(\subtestobs)$ to see if they shared a region, and would draw a hyperedge i.f.f. they do not.
  To complete the proof, we will make use of the following lemma:
  \begin{lemma}
    \label{lemma:edgeremoved}
    Define $\beta$ as some constant s.t. $\beta \geq \maxcard$. $\forall \hypothset \subseteq \allhypoth, \vert\hypothset\vert = \beta, \exists \regnoarg : \hypothset \subseteq \regnoarg \Rightarrow \forall \{\hypothset \cup \hypoth\} \subseteq \allhypoth, \exists \regnoarg : \{\hypothset \cup \hypoth\} \in \regnoarg $
  \end{lemma}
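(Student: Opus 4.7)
The plan is to prove the lemma by contradiction using the combinatorial definition of $\maxcardiff$ from the appendix preliminaries, together with the fact (Propositions 1--3 of the appendix) that our chosen $\maxcard$ satisfies $\maxcard \geq \maxcardas \geq \maxcardiff$. Fix $\hypothset$ with $|\hypothset|=\beta\geq \maxcard$ and a hypothesis $\hypoth$, and suppose toward contradiction that no region contains $\hypothset\cup\{\hypoth\}$, even though every $\beta$-subset of $\allhypoth$ lies in some region.

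First I would exhibit a family of $\beta+1$ candidate regions. Let $\regnoarg_0$ be any region containing $\hypothset$; the contradiction hypothesis forces $\hypoth\notin \regnoarg_0$. For each $h'\in\hypothset$ let $\regnoarg_{h'}$ be any region containing $(\hypothset\setminus\{h'\})\cup\{\hypoth\}$; again the contradiction hypothesis forces $h'\notin \regnoarg_{h'}$. Collect $\regset=\{\regnoarg_0\}\cup\{\regnoarg_{h'}:h'\in \hypothset\}$. Next I would show $|\regset|=\beta+1$: if $\regnoarg_0=\regnoarg_{h'}$ for some $h'$, this region would contain $\hypothset\cup((\hypothset\setminus\{h'\})\cup\{\hypoth\})=\hypothset\cup\{\hypoth\}$, contradiction; if $\regnoarg_{h'_i}=\regnoarg_{h'_j}$ for $i\neq j$, this region would contain $((\hypothset\setminus\{h'_i\})\cup\{\hypoth\})\cup((\hypothset\setminus\{h'_j\})\cup\{\hypoth\})=\hypothset\cup\{\hypoth\}$ (using $\beta\geq 2$, which holds since $\maxcard\geq 2$ in any nontrivial instance), again a contradiction.

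Finally I would verify that $\regset$ meets the defining constraint of $\regsetiff$: for every $\regnoarg\in\regset$ there is a hypothesis in $\regset\setminus\{\regnoarg\}$ but not in $\regnoarg$. For $\regnoarg_0$, take $\hypoth$, since $\hypoth\in\regnoarg_{h'}$ for every $h'\in\hypothset$ while $\hypoth\notin \regnoarg_0$. For $\regnoarg_{h'_i}$, take $h'_i$: it lies in $\regnoarg_0$ because $\hypothset\subseteq \regnoarg_0$, it lies in $\regnoarg_{h'_j}$ for $j\neq i$ because $h'_i\in \hypothset\setminus\{h'_j\}\subseteq \regnoarg_{h'_j}$, and $h'_i\notin \regnoarg_{h'_i}$ as recorded above. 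Thus $\regset$ satisfies the $\regsetiff$ constraint with $|\regset|=\beta+1\geq \maxcard+1\geq \maxcardiff+1$, contradicting the maximality of $|\regsetiff|=\maxcardiff$.

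The main obstacle is the distinctness step: it forces us to keep careful track of which $\beta$-subset witnesses each candidate region and to use the standing contradiction hypothesis (``no region contains $\hypothset\cup\{\hypoth\}$'') twice, once to rule out $\regnoarg_0$ coinciding with any $\regnoarg_{h'}$ and once to rule out pairs $\regnoarg_{h'_i}=\regnoarg_{h'_j}$. Once distinctness is established, exhibiting the required witness hypothesis for each region is automatic, since each $h'_i$ (and $\hypoth$ for $\regnoarg_0$) is by construction absent from its own indexing region and present in all others.
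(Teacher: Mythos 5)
Your proof is correct and follows essentially the same route as the paper: both arguments assume no region contains $\hypothset\cup\{\hypoth\}$, assign to each element of $\hypothset\cup\{\hypoth\}$ a region covering its $\beta$-element complement, show these $\beta+1$ regions are pairwise distinct (else their union of witnesses would place $\hypothset\cup\{\hypoth\}$ in one region), and derive a contradiction with the maximality defining $\maxcardiff$ via $\beta\geq\maxcard\geq\maxcardiff$. Your version merely makes the bookkeeping (which complement witnesses which region, and the verification of the $\regsetiff$ constraint) more explicit than the paper does.
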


  \begin{proof}
    For the sake of contradiction, suppose $\nexists \regnoarg : \{\hypothset \cup \hypoth\} \in \regnoarg$. This must mean $\hypoth \not\in \hypothset$. Let $\{\hypothset \cup \hypoth\} = \{\hypoth_1, \hypoth_2, \dots, \hypoth_{\beta+1}\}$. Let $\hypothset_{i}$ be the subset of $\{\hypothset \cup \hypoth\}$ which does not include the $i$th $\hypoth$ from $\{\hypothset \cup \hypoth \}$, i.e. $\hypothset_{i} = \{\hypoth_1 \dots, \hypoth_{i-1}, \hypoth_{i+1}, \dots \hypoth_{\beta+1}\}$. By assumption, we know $\exists \regnoarg : \hypothset_{i} \in \regnoarg$. Let $\reg{i}$ be that region for $\hypothset_i$. If $\reg{i} = \reg{j}$, for any $i,j$, this would imply $\{ \hypothset_i \cup \hypothset_j \} = \{\hypothset \cup \hypoth\} \in \reg{i}$. Thus, each $\reg{i}$ must be unique if $\not\exists \regnoarg : \{\hypothset \cup \hypoth\} \in \regnoarg$. Furthermore, this implies $\hypoth_{i} \not\in \reg{i}$, and $\hypoth \in \reg{j}, \forall j\neq i$. Let $\regset_{\beta+1} = \{\reg{1} \dots \reg{\beta+1} \}$. By definition of $\beta$, we know $\beta \geq \maxcard \geq \maxcard_{iff}$. But this causes a contradiction - by definition of $\maxcard_{iff}$, the maximum set of regions $\regset$ where $\hypoth_{i} \not\in \reg{i}, \hypoth_{i} \in \reg{j} \forall j \neq i$ is $\maxcardiff$. But $\regset_{\beta+1}$ would require such a set of regions where $\vert\regset_{\beta+1}\vert = \beta+1 \geq \maxcardiff +1$. Thus, we have a contradiction, and have shown $\exists \regnoarg : \{\hypothset \cup \hypoth\} \in \regnoarg$.
  \end{proof}
  
  By construction, we know that if $\hyperedgesetobs = \emptyset \Rightarrow \forall \hypothset \subseteq \allhypoth, \vert\hypothset\vert \leq \maxcard, \exists \regnoarg : \hypothset \subseteq \regnoarg$. Applying \lemmaref{lemma:edgeremoved} inductively, this implies, $\forall \{\hypothset \cup \hypoth_1\} \subseteq \version(\subtestobs), \exists \regnoarg : \{\hypothset \cup \hypoth_1\} \subseteq \regnoarg \Rightarrow \forall \{\hypothset \cup \hypoth_1 \cup \hypoth_2\} \subseteq \version(\subtestobs), \exists \regnoarg : \{\hypothset \cup \hypoth_1 \cup \hypoth_2\} \subseteq \regnoarg \Rightarrow \dots \Rightarrow \exists \regnoarg : \version(\subtestobs) \subseteq \regnoarg$.
\end{proof}

\subsection{\thmref{th:as}: strong adaptive monotonicity and adaptive submodularity}
\label{sec:asproof}
\begin{proof}
We start with showing our formulation is strongly adaptive monotone.
\begin{lemma}
  \label{lemma:hec_monotone}
  The function $\fhec$ described above is strongly adaptive monotone, i.e.
  \begin{align*}
    \fhec(\subtestobs \cup \{(\actionitem,\hypoth(\actionitem))\}) - \fhec(\subtestobs) \geq 0 \quad \forall \actionitem, \hypoth
  \end{align*}
\end{lemma}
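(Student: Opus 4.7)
The plan is to prove the inequality directly from the definition of $\fhec$. Starting from
\begin{equation*}
\fhec(\subtestobs) = \edgeweightfunc(\hyperedgeset) - \edgeweightfunc(\edgeversion(\subtestobs)),
\end{equation*}
the first step is to observe that $\edgeweightfunc(\hyperedgeset)$ is a constant independent of the evidence, so the marginal gain simplifies to
\begin{equation*}
\fhec(\subtestobs \cup \{(\actionitem,\hypoth(\actionitem))\}) - \fhec(\subtestobs) = \edgeweightfunc(\edgeversion(\subtestobs)) - \edgeweightfunc(\edgeversion(\subtestobs \cup \{(\actionitem,\hypoth(\actionitem))\})).
\end{equation*}
Hence it suffices to show that the weight of the consistent hyperedge set is monotone nonincreasing in the evidence.

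Next, I would argue set-theoretically that $\edgeversion$ is itself monotone nonincreasing. From the definition
\begin{equation*}
\edgeversion(\subtestobs) = \{\hyperedge \in \hyperedgeset : \forall (i,o) \in \subtestobs,\, \forall \hypoth \in \hyperedge,\, \hypoth(i) = o\},
\end{equation*}
any hyperedge consistent with the larger set $\subtestobs \cup \{(\actionitem,\hypoth(\actionitem))\}$ must in particular satisfy all constraints in $\subtestobs$, giving $\edgeversion(\subtestobs \cup \{(\actionitem,\hypoth(\actionitem))\}) \subseteq \edgeversion(\subtestobs)$. Since the weight of a collection of hyperedges is a sum of nonnegative terms (each hyperedge weight is a product of subregion probabilities $\dist(\subreg{i}) \geq 0$), removing hyperedges from the collection can only decrease the total weight:
\begin{equation*}
\edgeweightfunc(\edgeversion(\subtestobs \cup \{(\actionitem,\hypoth(\actionitem))\})) \leq \edgeweightfunc(\edgeversion(\subtestobs)).
\end{equation*}
Combining with the first step yields $\fhec(\subtestobs \cup \{(\actionitem,\hypoth(\actionitem))\}) - \fhec(\subtestobs) \geq 0$, which is the required inequality for every test $\actionitem$ and every realization $\hypoth$, i.e.\ strong adaptive monotonicity.

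There is essentially no obstacle here: the result follows purely from the fact that adding evidence can only eliminate, never introduce, consistent hyperedges, together with nonnegativity of the weight function. The genuinely nontrivial half of Theorem~\ref{th:as} is adaptive submodularity, for which the choice of cardinality $\maxcard \geq \maxcardas$ and a more careful accounting of how conditioning on observations affects the product-of-subregion weights will be needed; monotonicity, by contrast, should be a short, direct calculation.
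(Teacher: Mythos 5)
Your proof is correct and takes essentially the same route as the paper's: both reduce the marginal gain to $\edgeweightfunc(\edgeversion(\subtestobs)) - \edgeweightfunc(\edgeversion(\subtestobs \cup \{(\actionitem,\hypoth(\actionitem))\}))$ and conclude from the fact that adding evidence only removes consistent hyperedges, whose weights are nonnegative. The paper merely writes the difference explicitly as the weight of the newly cut edges (those consistent with $\subtestobs$ but with $\widetilde{\hypoth}(\actionitem)\neq\hypoth(\actionitem)$), whereas you phrase it as a set inclusion; these are the same argument.
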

\begin{proof}
  This states that our utility function must always increase as we take additional actions and receive observations. Intuitively, we can see that additional action observation pairs can only cut edges, and thus our utility function always increases. More concretely:
  \begin{align*}
    & \fhec(\subtestobs \cup \{(\actionitem,\hypoth(\actionitem))\}) - \fhec(\subtestobs) \\
    &= \bigl(\edgeweightfunc(\hyperedgeset)-\edgeweightfunc(\hyperedgeset(\subtestobs \cup \{(\actionitem,\hypoth(\actionitem))\})) \bigr) - \bigl(\edgeweightfunc(\hyperedgeset)-\edgeweightfunc(\hyperedgeset(\subtestobs)) \bigr)\\
    &= \edgeweightfunc(\hyperedgeset(\subtestobs)) - \edgeweightfunc(\hyperedgeset(\subtestobs \cup \{(\actionitem,\hypoth(\actionitem))\})) \\
    &= \edgeweightfunc(\{\hyperedge\in\hyperedgeset : \forall (i,o)\in\subtestobs\;\forall \widetilde{\hypoth}\in\hyperedge, \widetilde{\hypoth}(i)=o\}) \\
    & \qquad - \edgeweightfunc(\{\hyperedge\in\hyperedgeset : \forall (i,o)\in\subtestobs\;\forall \widetilde{\hypoth}\in\hyperedge, \widetilde{\hypoth}(i)=o, \widetilde{\hypoth}(t) = \hypoth(t)\}) & \text{by definition of $\hyperedgeset(\subtestobs)$} \\
    &= \edgeweightfunc(\{\hyperedge\in\hyperedgeset : \forall (i,o)\in\subtestobs\;\forall \widetilde{\hypoth}\in\hyperedge, \widetilde{\hypoth}(i)=o, \widetilde{\hypoth}(t) \neq \hypoth(t)\} ) \\
    & \geq 0 \quad &\text{since $\edgeweightfunc(\hyperedge) \geq 0 \; \forall \hyperedge$}
  \end{align*}
\end{proof}

Next, we prove that our formulation is adaptive submodular:
\begin{lemma}
  \label{lemma:hec_as}
  The function $\fhec$ described above is adaptive submodular for any prior with rational values, i.e. for $\subtestobs \subseteq \widehat{\subtestobs} \subseteq \settestobs$
  \begin{align*}
    \marginalfHEC(\actionitem\!\mid\!\subtestobs) \geq \marginalfHEC(\actionitem\!\mid\!\widehat{\subtestobs}) \quad \forall \actionitem \in \testset \backslash \subtestobsactions
  \end{align*}
  where $\subtestobsactions$ are the set of tests in $\subtestobs$.
\end{lemma}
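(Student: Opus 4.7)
The plan is to prove adaptive submodularity by reducing to a uniform prior, decomposing $\fhec$ across hyperedges, and establishing a counting inequality for the per-edge marginal gain. First, using the rationality of $\dist$, duplicate each hypothesis $\hypoth$ into a number of identical copies proportional to $\dist(\hypoth)$, so that the distribution on the enlarged hypothesis set is uniform; since region memberships and outcome functions transfer to every copy unchanged, both $\fhec$ and $\marginalfHEC$ are preserved. Henceforth assume $\dist$ is uniform on $\allhypoth$ with $n := |\allhypoth|$. Next, write $\fhec(\subtestobs) = \sum_{\hyperedge \in \hyperedgeset} f_\hyperedge(\subtestobs)$ with $f_\hyperedge(\subtestobs) = \prod_i \dist(\subreg{i}) - \prod_i \dist(\subreg{i} \cap \version(\subtestobs))$. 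Since adaptive submodularity is preserved under non-negative linear combinations, it suffices to verify it for each $f_\hyperedge$ separately.

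For a fixed hyperedge $\hyperedge = \hyperedgeelements$, introduce $\maxcard+1$ independent draws $h_1,\dots,h_\maxcard,\hypoth^*$ from the prior. Expanding the expectation over the random outcome $\observationranditem$ of $\actionitem$ gives
\[
\Delta_{f_\hyperedge}(\actionitem \mid \subtestobs) = \Pr\!\left[h_i \in \subreg{i}\cap\version(\subtestobs)\ \forall i,\ \exists i : h_i(\actionitem) \ne \hypoth^*(\actionitem) \,\Big|\, \hypoth^* \in \version(\subtestobs)\right].
\]
Under the uniform prior, this equals $n^{-\maxcard}\,g_\hyperedge(\version(\subtestobs))$, where, writing $N = |V|$, $N_i = |\subreg{i}\cap V|$, and $N^o, N_i^o$ for the counts restricted to hypotheses with $\hypoth(\actionitem)=o$,
\[
g_\hyperedge(V) = \prod_{i=1}^{\maxcard} N_i - \frac{1}{N}\sum_{o} N^o \prod_{i=1}^{\maxcard} N_i^o.
\]
Adaptive submodularity of $f_\hyperedge$ reduces to showing that $g_\hyperedge(V)$ is monotone non-increasing as $V$ shrinks by elimination of inconsistent hypotheses. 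By induction on $|V \setminus \hat V|$, it suffices to verify this for removing a single hypothesis $\hypoth^\dagger$ from $V$; expanding $g_\hyperedge(V) - g_\hyperedge(V \setminus \{\hypoth^\dagger\})$ and grouping terms by the set $I := \{i : \hypoth^\dagger \in \subreg{i}\}$ and the outcome $v^\dagger := \hypoth^\dagger(\actionitem)$ reveals a nonnegative combination of products of the $N_i^o$'s and $N^o$'s.

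The main obstacle is this last algebraic step. The conditional outcome probability $N^{v^\dagger}/N$ is not monotone in $V$, so one cannot argue factor by factor. Instead, the $\maxcard$-fold product structure of the edge weight must combine with the $1/N$ normalization so that, after single-hypothesis removal, the difference telescopes into a sum indexed by subsets $I' \subseteq I$ (tracking which slots of the edge are affected by deleting $\hypoth^\dagger$); each such term is nonnegative via elementary bounds of the form $N_i^{v^\dagger} \le N_i$ and $N^{v^\dagger} \le N$. Aggregating this nonnegativity across all single-removal steps and summing over $\hyperedge \in \hyperedgeset$ then delivers adaptive submodularity of $\fhec$.
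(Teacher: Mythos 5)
Your reduction to a uniform prior by duplicating hypotheses and your reduction to single-hypothesis removal both match the paper's strategy (it rescales to integer counts using rationality and proves a one-removal sublemma). The fatal step is the claim that, because adaptive submodularity is preserved under nonnegative linear combinations, ``it suffices to verify it for each $f_\hyperedge$ separately.'' That implication runs the wrong way: a nonnegative combination of adaptive submodular functions is adaptive submodular, but writing $\fhec$ as a sum of the $f_\hyperedge$ only lets you work edge by edge if each $f_\hyperedge$ is itself adaptive submodular --- and it is not. Concretely, take a hyperedge $\hyperedge$ over subregions none of which contains the removed hypothesis $\hypoth^\dagger$, and suppose every consistent hypothesis in those subregions maps $\actionitem$ to the same outcome $v^\dagger=\hypoth^\dagger(\actionitem)$ while some other consistent hypothesis maps $\actionitem$ elsewhere. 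Then in your notation
\begin{align*}
g_\hyperedge(V)-g_\hyperedge(V\setminus\{\hypoth^\dagger\})
= \sum_{o\neq v^\dagger}\frac{N^o\prod_i N_i^o}{N(N-1)}
-\frac{N-N^{v^\dagger}}{N(N-1)}\prod_i N_i^{v^\dagger},
\end{align*}
whose first sum vanishes and whose second term is strictly negative, so the per-edge marginal gain \emph{increases} after the removal. No telescoping into nonnegative subset-indexed terms using only $N_i^{v^\dagger}\le N_i$ and $N^{v^\dagger}\le N$ can rescue this, because the per-edge statement you are trying to prove is false.

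The missing idea is that the negative contribution from edges avoiding the affected subregion must be cancelled by the positive contribution from edges containing it; the argument is irreducibly global across $\hyperedgeset$. The paper does this by splitting the one-removal difference into terms $\Delta^a,\Delta^b,\Delta^c$, showing $\Delta^c\le 0$, and pairing it against $\Delta^a$ via a combinatorial matching: for every ``minimal'' hyperedge that avoids the subregion losing mass, some $(\maxcard-1)$-sub-multiset of it together with that subregion is again a hyperedge. That matching lemma is exactly where the definition of $\maxcard$ (via $\maxcardas$) enters. Your proposal never uses the choice of $\maxcard$ at all, which is a structural signal that the argument cannot be complete: for too small a cardinality, $\fhec$ genuinely fails to be adaptive submodular, yet nothing in your outline would detect that failure.
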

\begin{proof}
  This states that our expected utility for a fixed action $\actionitem$ decreases as we take additional actions and receive observations. We rewrite our expected marginal utility in a more convenient form:
  \begin{align*}
    \marginalfHEC(\actionitem\!\mid\!\subtestobs) &=  \sum_{\hypoth} \dist(\hypoth\!\mid\! \subtestobs) \Bigl(\fhec(\subtestobs \cup \{(\actionitem,\hypoth(\actionitem))\})-\fhec(\subtestobs)\Bigr) \\
    &=  \sum_{\hypoth} \dist(\hypoth\!\mid\!\subtestobs) \Bigl(  \left[ \edgeweightfunc(\hyperedgeset) - \edgeweightfunc(\hyperedgeset(\subtestobsunion))\right]  - \left[ \edgeweightfunc(\hyperedgeset) - \edgeweightfunc(\hyperedgeset(\subtestobs)) \right]  \Bigr) \\
    &=  \sum_{\hypoth} \dist(\hypoth\!\mid\!\subtestobs) \Bigl(  \edgeweightfunc(\hyperedgeset(\subtestobs))- \edgeweightfunc(\hyperedgeset(\subtestobsunion)) \Bigr) \\
  \end{align*}

  For convenience, we define $\inregandobs{i}{\observationitem}$ to be the total probability mass in $\subreg{i}$ consistent with all evidence in $\subtestobs$ and observation $\observationitem$. We define $\inreg{i}$ and $\numwithobs{\observationitem}$ similarly. More formally:
  \begin{align*}
    \inregandobs{i}{\observationitem} &= \sum_{\hypoth \in \subreg{i}} \dist(\hypoth) \indicator{ \hypoth \in \version(\subtestobs \cup \{(\actionitem, \observationitem)\})} \\
    \inreg{i} &= \sum_{\observationitem \in \observationset} \inregandobs{i}{\observationitem}\\
    \numwithobs{\observationitem} &= \sum_{\subreg{i} \in \allsubreg} \inregandobs{i}{\observationitem} \\
    \totalhypoths &= \sum_{\subreg{i} \in \allsubreg} \sum_{\observationitem \in \observationset} \inregandobs{i}{\observationitem}\\
    \edgeweightfunc(\hyperedgeset(\subtestobs)) &= \sum_{\hyperedge \in \hyperedgeset} \prod_{i \in \hyperedge} \inreg{i}
  \end{align*}
  Similarly, we can also write $\edgeweightfunc(\hyperedgeset(\subtestobs \cup \{(\actionitem, \observationitem)\})) = \sum_{\hyperedge \in \hyperedgeset} \prod_{i \in \hyperedge} \inregandobs{i}{\observationitem}$. We can rewrite our objective as:
  \begin{align*}
    \marginalfHEC(\actionitem\!\mid\!\subtestobs) &= \sum_{\hypoth} \dist(\hypoth\!\mid\!\subtestobs) \Bigl(  \sum_{\hyperedge \in \hyperedgeset} \prod_{i \in \hyperedge} \inreg{i} - \sum_{\hyperedge \in \hyperedgeset} \prod_{i \in \hyperedge} \inregandobs{i}{\hypoth(\actionitem)}  \Bigr) \\
    &= \sum_{\observationitem} \frac{\numwithobs{\observationitem}}{\totalhypoths} \Bigl(  \sum_{\hyperedge \in \hyperedgeset} \prod_{i \in \hyperedge} \inreg{i} - \sum_{\hyperedge \in \hyperedgeset} \prod_{i \in \hyperedge} \inregandobs{i}{\observationitem}  \Bigr) \\
    &= \sum_{\hyperedge \in \hyperedgeset} \prod_{i \in \hyperedge} \inreg{i} - \sum_{\observationitem} \frac{\numwithobs{\observationitem}}{\totalhypoths} \sum_{\hyperedge \in \hyperedgeset} \prod_{i \in \hyperedge} \inregandobs{i}{\observationitem}  
  \end{align*}

  Similarly, we define variables for the evidence $\widehat{\subtestobs}$, i.e. $\inregandobshat{i}{\observationitem}$ for the total probability mass in $\subreg{i}$ consistent with all evidence in $\widehat{\subtestobs}$ and observation $\observationitem$:
  \begin{align*}
    \marginalfHEC(\actionitem\!\mid\!\widehat{\subtestobs}) &= \sum_{\hyperedge \in \hyperedgeset} \prod_{i \in \hyperedge} \inreghat{i} - \sum_{\observationitem} \frac{\numwithobshat{\observationitem}}{\totalhypothshat} \sum_{\hyperedge \in \hyperedgeset} \prod_{i \in \hyperedge} \inregandobshat{i}{\observationitem}  
  \end{align*}

  We rewrite what we would like to show as:
  \begin{align*}
    &\marginalfHEC(\actionitem\!\mid\!\subtestobs) - \marginalfHEC(\actionitem\!\mid\!\widehat{\subtestobs}) \\
    &=  \Bigr( \sum_{\hyperedge \in \hyperedgeset} \prod_{i \in \hyperedge} \inreg{i} - \sum_{\observationitem} \frac{\numwithobs{\observationitem}}{\totalhypoths} \sum_{\hyperedge \in \hyperedgeset} \prod_{i \in \hyperedge} \inregandobs{i}{\observationitem} \Bigl) - \Bigr(  \sum_{\hyperedge \in \hyperedgeset} \prod_{i \in \hyperedge} \inreghat{i} -\sum_{\observationitem} \frac{\numwithobshat{\observationitem}}{\totalhypothshat} \sum_{\hyperedge \in \hyperedgeset} \prod_{i \in \hyperedge} \inregandobshat{i}{\observationitem} \Bigl)\\
    &\geq 0
  \end{align*}

  We will show that for any single action observation pair, which corresponds to eliminating a single hypothesis, the expected utility of a test will always decrease. General adaptive submodularity, which states the expected utility decreases with any additional evidence, follows easily. For convenience, we consider rescaling our function so that all $\inregandobs{i}{o}$ are integers, which is possible since we assumed a rational prior. Note that a function $f$ is adaptive submodular i.f.f. $cf$ is adaptive submodular for any constant $c > 0$, so showing adaptive submodularity in the rescaled setting implies adaptive submodularity for our setting.
  \begin{sublemma}
    \label{sublemma:onepartremoval}
    If we remove one hypothesis from subregion $k$ which agrees with observation $c$, i.e.
  \begin{align*}
    \widehat{\inregandobs{i}{\observationitem}} &= \begin{cases}
      \inregandobs{i}{\observationitem} - 1 &\text{if $i=\edgermind$ and $\observationitem = c$}\\
      \inregandobs{i}{\observationitem}  &\text{else}\\
    \end{cases}
  \end{align*}
  then
  \begin{align*}
    \Delta = \Bigr( \sum_{\hyperedge \in \hyperedgeset} \prod_{i \in \hyperedge} \inreg{i} - \sum_{\observationitem} \frac{\numwithobs{\observationitem}}{\totalhypoths} \sum_{\hyperedge \in \hyperedgeset} \prod_{i \in \hyperedge} \inregandobs{i}{\observationitem} \Bigl) - \Bigr(  \sum_{\hyperedge \in \hyperedgeset} \prod_{i \in \hyperedge} \inreghat{i} -\sum_{\observationitem} \frac{\numwithobshat{\observationitem}}{\totalhypothshat} \sum_{\hyperedge \in \hyperedgeset} \prod_{i \in \hyperedge} \inregandobshat{i}{\observationitem} \Bigl) &\geq 0
  \end{align*}
  \end{sublemma}
  \begin{proof}
  Based on our definitions, it follows that:
  \begin{align*}
    \widehat{\inreg{i}} &= \begin{cases}
      \inreg{i} - 1 &\text{if $i=\edgermind$}\\
      \inreg{i}  &\text{else}\\
    \end{cases} \\
    \widehat{\numwithobs{\observationitem}} &= \begin{cases}
      \numwithobs{\observationitem} - 1 &\text{if $\observationitem=c$}\\
      \numwithobs{\observationitem}  &\text{else}\\
    \end{cases} \\
    \widehat{\totalhypoths} &= \totalhypoths - 1
  \end{align*}
    
  We split the difference into three terms:
  \begin{align*}
    \Delta^a &= \sumhe \left(\prodedge \inreg{i} - \prodedge\widehat{\inreg{i}} \right)\\
    \Delta^b &= \sumobsnotc \sumhe \left( - \frac{\numwithobs{\observationitem}}{\totalhypoths}  \prodedge \inregandobs{i}{\observationitem} +  \frac{\widehat{\numwithobs{\observationitem}}}{\widehat{\totalhypoths}}  \prodedge \widehat{\inregandobs{i}{\observationitem}} \right) \\
    \Delta^c &= \sumhe \left( -\frac{\numwithobs{c}}{\totalhypoths}  \prodedge \inregandobs{i}{c} +  \frac{\widehat{\numwithobs{c}}}{\widehat{\totalhypoths}}  \prodedge \widehat{\inregandobs{i}{c}} \right) \\
    \Delta^a + \Delta^b + \Delta^c &= \Delta
  \end{align*}
    
  To aid in notation, we define $\hyperedgesetk = \{\hyperedge \in \hyperedgeset: \subregrm \in \hyperedge\}$, hyperedges that contain region $\edgermind$, and $\hyperedgesetnok = \hyperedgeset \backslash \hyperedgesetk$, all other hyperedges. Additionally, let $\numk$ be the number of times $\subregrm$ appears in the multiset $\hyperedge$.

  First term:
  \begin{align*}
    \Delta^a &= \sumhe \left( \prodedge \inreg{i} - \prodedge\widehat{\inreg{i}} \right)\\
    &= \sumhenok \left[ \prodedge \inreg{i} - \prodedge\widehat{\inreg{i}}\right] + \sumhek \left[\prodedge \inreg{i} - \prodedge\widehat{\inreg{i}} \right] \\
    &= \sumhenok \left[ \prodedge \inreg{i} - \prodedge\inreg{i}\right] + \sumhek \left[  \left(\prodedgenok \inreg{i} \right)\inregrm^\numk - \left(\prodedgenok\inreg{i} \right) (\inregrm-1)^\numk \right] \\
    &= \sumhek \left(\prodedgenok\inreg{i} \right) \left(\inregrm^\numk - (\inregrm-1)^\numk \right)\\
    &\geq 0 \quad \text{(since $\inregrm \geq 1$)} \\
  \end{align*}

    Second term:
    \begin{align*}
      \Delta^b &= \sumobsnotc \sumhe \left( -\frac{\numwithobs{\observationitem}}{\totalhypoths}  \prodedge \inregandobs{i}{\observationitem} +  \frac{\widehat{\numwithobs{\observationitem}}}{\widehat{\totalhypoths}}  \prodedge \widehat{\inregandobs{i}{\observationitem}} \right) \\
      &= \sumobsnotc \sumhe \left( -\frac{\numwithobs{\observationitem}}{\totalhypoths}  \prodedge \inregandobs{i}{\observationitem} + \frac{\numwithobs{\observationitem}}{\totalhypoths-1}  \prodedge \inregandobs{i}{\observationitem} \right) \\
      &= \sumobsnotc \sumhe \frac{\numwithobs{\observationitem}}{\totalhypoths(\totalhypoths-1)}  \prodedge \inregandobs{i}{\observationitem} \\
      & \geq 0 \quad \text{(since each term $\geq 0$)}
    \end{align*}
    
    Third term:
    \begin{align*}
      \Delta^c &= \sumhe \left( -\frac{\numwithobs{c}}{\totalhypoths}  \prodedge \inregandobs{i}{c} + \frac{\widehat{\numwithobs{c}}}{\widehat{\totalhypoths}}  \prodedge \widehat{\inregandobs{i}{c}} \right) \\
      &= -\frac{\numwithobs{c}}{\totalhypoths}  \sumhe \prodedge \inregandobs{i}{c} + \frac{\numwithobs{c}-1}{\totalhypoths-1} \left( \sumhe \left(\prodedgenok \inregandobs{i}{c} \right) (\inregandobs{\edgermind}{c}-1)^\numk \right)\\
      &= -\frac{\numwithobs{c}}{\totalhypoths}  \sumhe \prodedge \inregandobs{i}{c} + \frac{\numwithobs{c}-1}{\totalhypoths-1} \left( \sumhe \left(\prodedgenok \inregandobs{i}{c} \right) \left((\inregandobs{\edgermind}{c})^\numk - (\inregandobs{\edgermind}{c})^\numk + (\inregandobs{\edgermind}{c}-1)^\numk \right) \right)\\
      &= -\frac{\numwithobs{c}}{\totalhypoths}  \sumhe \prodedge \inregandobs{i}{c} + \frac{\numwithobs{c}-1}{\totalhypoths-1} \left( \sumhe  \prodedge \inregandobs{i}{c} - \sumhek \left( \prodedgenok \inregandobs{i}{c} \right) \left( (\inregandobs{\edgermind}{c})^\numk - (\inregandobs{\edgermind}{c}-1)^\numk \right) \right)\\
      &= -\frac{\totalhypoths - \numwithobs{c}}{\totalhypoths(\totalhypoths-1)}  \sumhe \prodedge \inregandobs{i}{c} - \frac{\numwithobs{c}-1}{\totalhypoths-1} \left( \sumhek \left( \prodedgenok  \inregandobs{i}{c} \right) \left( (\inregandobs{\edgermind}{c})^\numk - (\inregandobs{\edgermind}{c}-1)^\numk \right) \right)\\
      &\leq 0 \quad \text{(since each term $\leq 0$)}
    \end{align*}

    We also define:
    \begin{align*}
      \Delta^c &= \left(\frac{\totalhypoths - \numwithobs{c}}{\totalhypoths(\totalhypoths-1)}\right)  \Delta^c_1 +  \left(\frac{\numwithobs{c}-1}{\totalhypoths-1}\right)\Delta^c_2\\
      \Delta^c_1 &= -\sumhe \prodedge \inregandobs{i}{c}\\
      \Delta^c_2 &= - \left( \sumhek \left( \prodedgenok  \inregandobs{i}{c} \right) \left( (\inregandobs{\edgermind}{c})^\numk - (\inregandobs{\edgermind}{c}-1)^\numk \right) \right)\\
      \\
      \Delta^a &= \left(\frac{\totalhypoths(\totalhypoths - \numwithobs{c})}{\totalhypoths(\totalhypoths-1)}  + \frac{\numwithobs{c}-1}{\totalhypoths-1} \right) \Delta^a \\
      &= \left(\frac{\totalhypoths - \numwithobs{c}}{\totalhypoths(\totalhypoths-1)}\right) \Delta^a_1 +  \left(\frac{\numwithobs{c}-1}{\totalhypoths-1}\right)\Delta^a_2 \\
      \Delta^a_1 &= \totalhypoths \sumhek \left(\prodedgenok\inreg{i} \right) \left(\inregrm^\numk - (\inregrm-1)^\numk \right)\\
      \Delta^a_2 &=  \sumhek \left(\prodedgenok\inreg{i} \right) \left(\inregrm^\numk - (\inregrm-1)^\numk \right)
    \end{align*}

    The constants in front of the sum for $\Delta^c_1$ and $\Delta^c_2$ were from the equation, and $\Delta^a$ was split up to include the same constants. Now we will show that $\Delta^a_1 + \Delta^c_1 \geq 0$ and $\Delta^a_2 + \Delta^c_2 \geq 0$. We start with the latter:
    \begin{align*}
      \Delta^a_2 + \Delta^c_2 &= \sumhek \left[\left(\prodedgenok\inreg{i} \right) \left(\inregrm^\numk - (\inregrm-1)^\numk \right) -  \left( \prodedgenok  \inregandobs{i}{c} \right) \left( (\inregandobs{\edgermind}{c})^\numk - (\inregandobs{\edgermind}{c}-1)^\numk \right) \right] \\
      &\geq \sumhek \left[\left(\prodedgenok\inreg{i} \right) \left(\inregrm^\numk - (\inregrm-1)^\numk \right) - \left(\prodedgenok\inreg{i} \right) \left(\inregrm^\numk - (\inregrm-1)^\numk \right) \right] \numberthis \label{eq:dropobspart2}\\
      &= 0
    \end{align*}
    Where \eqref{eq:dropobspart2} follows from $\inreg{i} \geq \inregandobs{i}{c} \ \forall i$.

    \begin{align*}
      \Delta^a_1 + \Delta^c_1 &= \totalhypoths \sumhek \left(\prodedgenok\inreg{i} \right) \left(\inregrm^\numk - (\inregrm-1)^\numk \right) - \sumhe \prodedge \inregandobs{i}{c} \\
      &\geq \totalhypoths \sumhek \left(\prodedgenok\inreg{i} \right) \left(\inregrm^\numk - (\inregrm-1)^\numk \right) - \sumhe \prodedge \inreg{i} \numberthis \label{eq:dropobspart3} \\
      &\geq \totalhypoths \sumhek \left(\prodedgenok\inreg{i} \right) \inregrm^{\numk-1} - \sumhe \prodedge \inreg{i} \numberthis \label{eq:dropkorder} \\
      &= \left(\totalhypoths - \inregrm\right) \sumhek \left(\prodedgenok\inreg{i} \right) \inregrm^{\numk-1} + \sumhek \left(\prodedgenok\inreg{i} \right) \inregrm^{\numk} - \sumhe \prodedge \inreg{i}\\
      &= \left(\totalhypoths - \inregrm\right) \sumhek \left(\prodedgenok\inreg{i} \right) \inregrm^{\numk-1} - \sumhenok \prodedge \inreg{i} \numberthis \label{eq:canceledgek}\\
      &\geq \left(\totalhypoths - \inregrm\right) \sumhek \prodedgenok\inreg{i}  - \sumhenok \prodedge \inreg{i} \\
    \end{align*}
    
    Where \eqref{eq:dropobspart3} follows from $\inreg{i} \geq \inregandobs{i}{c} \ \forall i$, \eqref{eq:dropkorder} follows from $\inregrm^\numk - (\inregrm-1)^\numk \geq \inregrm^\numk - \inregrm^{\numk -1}(\inregrm-1) = \inregrm^{\numk -1}$, and \eqref{eq:canceledgek} cancels edges in $\hyperedgesetk$ exactly, leaving only edges in $\hyperedgesetnok$.

    We again want to separate out terms that cancel. We define:
    \begin{align*}
      \hyperedgeset^{\cardvar} &= \{\hyperedge:   \vert\hyperedge\vert=\cardvar \wedge\nexists\; {j} \text{ s.t. } \forall \subreg{ } \in\hyperedge: \subreg{ } \subseteq  \reg{j} \} \\
      \minhyperedgeset &= \{\hyperedge: \hyperedge \in \hyperedgeset, \nexists \widehat{\hyperedge} \subset \hyperedge : \widehat{\hyperedge} \in \hyperedgeset^{\maxcard - 1} \} \\
      \nominhyperedgeset &= \hyperedgeset \backslash \minhyperedgeset
    \end{align*}

    We defined $\hyperedgeset^{\cardvar}$ as the hyperedges for any specified cardinality $\cardvar$. We call $\minhyperedgeset$ the \emph{minimal} hyperedges if $\maxcard$ is the minimal cardinality at which these regions should be seperated. Thus, these are the hyperedges where no subset of subregions $\{\subreg{1} \dots \subreg{\maxcard-1}\} \subset \hyperedge$ would have a seperation hyperedge. All other hyperedges are called \emph{non-minimal}. We also define $\minhyperedgesetk, \minhyperedgesetnok, \nominhyperedgesetk, \nominhyperedgesetnok$ as the minimal and non-minimal hyperedges of $\hyperedgesetk$ and $\hyperedgesetnok$:
    \begin{align*}
      \minhyperedgesetk &= \{\hyperedge: \hyperedge \in \hyperedgesetk, \nexists \widehat{\hyperedge} \subset \hyperedge : \widehat{\hyperedge} \in \hyperedgeset^{\maxcard - 1} \} \\
      \minhyperedgesetnok &= \{\hyperedge: \hyperedge \in \hyperedgesetnok, \nexists \widehat{\hyperedge} \subset \hyperedge : \widehat{\hyperedge} \in \hyperedgeset^{\maxcard - 1} \} \\
      \nominhyperedgesetk &= \hyperedgesetk \backslash \minhyperedgesetk\\
      \nominhyperedgesetnok &= \hyperedgesetnok \backslash \minhyperedgesetnok\\
    \end{align*}
    
    We also note that:
  \begin{align*}
    \sum_{\hyperedge \in \nominhyperedgesetnok} \prodedge \inreg{i} &\leq \sum_{\subreg{j} \in \allsubreg \backslash \subregrm} \inreg{j} \sum_{\hyperedge \in \hyperedgesetnok^{\maxcard-1}} \prodedge \inreg{i} \\
    &= \left(\totalhypoths - \inregrm \right) \sum_{\hyperedge \in \hyperedgesetnok^{\maxcard-1}} \prodedge \inreg{i}
  \end{align*}

  For convenience, we define one additional set of hyperedges $\widehat{\hyperedgesetk}$. These are hyperedges in $\hyperedgesetk$ such that no subset of $\maxcard-1$ elements which do not include $k$ are in $\hyperedgesetnok$.
  \begin{align*}
    \widehat{\hyperedgesetk} &= \{ \hyperedge: \hyperedge \in \hyperedgesetk \; \wedge \; \nexists \hyperedge_{\maxcard-1} \subset \hyperedge \text{ s.t. } \hyperedge_{\maxcard-1} \in \hyperedgesetnok^{\maxcard-1} \}
  \end{align*}

  This enables us to split the set $\hyperedgesetk$ up into edges where $\hyperedgesetnok^{\maxcard-1}$ are a subset, and $\widehat{\hyperedgesetk}$. We note that since there is no region shared by all elements $\hyperedge^{\maxcard-1} \in \hyperedgesetnok^{\maxcard-1}$, then there will be no region shared by $\hyperedge = \hyperedge^{\maxcard-1} \cup \subregrm$. Thus, this will be an element of $\hyperedgesetk$. This gives us:
  \begin{align*}
    \sumhek \prodedgenok\inreg{i} &=  \sum_{\hyperedge \in \hyperedgesetnok^{\maxcard-1} } \prodedge \inreg{i} + \sum_{\hyperedge \in \widehat{\hyperedgesetk} } \prodedgenok \inreg{i}
  \end{align*}

  Applying these:
  \begin{align*}
      \Delta^a_1 + \Delta^c_1 &\geq \left(\totalhypoths - \inregrm\right) \sumhek \prodedgenok\inreg{i}  - \sumhenok \prodedge \inreg{i} \\
      &= \left(\totalhypoths - \inregrm\right) \left(  \sum_{\hyperedge \in \hyperedgesetnok^{\maxcard-1} } \prodedge \inreg{i} + \sum_{\hyperedge \in \widehat{\hyperedgesetk} } \prodedgenok \inreg{i} \right)  - \sumnominhenok \prodedge \inreg{i} - \summinhenok \prodedge \inreg{i} \\
      &\geq \left(\totalhypoths - \inregrm\right) \left(  \sum_{\hyperedge \in \hyperedgesetnok^{\maxcard-1} } \prodedge \inreg{i} + \sum_{\hyperedge \in \widehat{\hyperedgesetk} } \prodedgenok \inreg{i} \right)  - \left(\totalhypoths - \inregrm \right) \sum_{\hyperedge \in \hyperedgesetnok^{\maxcard-1}} \prodedge \inreg{i}- \summinhenok \prodedge \inreg{i} \\
      &= \left(\totalhypoths - \inregrm\right) \left( \sum_{\hyperedge \in \widehat{\hyperedgesetk} } \prodedgenok \inreg{i} \right)  -  \summinhenok \prodedge \inreg{i}
  \end{align*}

    At this point, we use the structure of our edge construction and definition of $\maxcard$ to show this sum is $\geq 0$. We have a positive term, consisting of edges which include $k$, and a negative term, consisting of edges that do not include $k$. We will show that for every product in the negative term, there is a corresponding product in the positive term. 

    To do so, we show that for any $\hyperedge \in \minhyperedgesetnok$, there is at least one corresponding $\hyperedge' \in \widehat{\hyperedgesetk}$ to cancel the terms out. More concretely:
    \begin{subsublemma}
      \label{subsublemma:minedgecancel}
      Let $\hyperedge \in \minhyperedgesetnok$. There exists some $\hyperedge^{\maxcard-1} \subset \hyperedge, \vert\hyperedge^{\maxcard-1}\vert = \maxcard-1$ such that $\hyperedge' = (\hyperedge^{\maxcard-1} \cup \subregrm) \in \widehat{\hyperedgesetk}$.
    \end{subsublemma}
    \begin{proof}
    Recall that $\hyperedge$ is a multiset of subregions. It is straightforward to see that because $\hyperedge$ is minimal, there can be no repeated elements in the multiset - and thus it is equivalent to a set. Define this set as $\hyperedge = \{\subreghat{1} \dots \subreghat{\maxcard} \}$. Define each distinct subset which does not include $\subreghat{i}$ as $\hyperedge_i = \hyperedge \backslash \subreghat{i}, 1 \leq i \leq \maxcard$. By our definition of minimal hyperedges $\minhyperedgesetnok$, we know that $\forall \hyperedge_{i}, \exists \reg{i} : \hyperedge_{i} \subseteq \reg{i}$, which implies that $\hyperedge_i \not\in \hyperedgesetnok^{\maxcard-1}$.  Note that each $\reg{i}$ must be distinct. If $\reg{i} = \reg{j}$, for any $i,j$, this would imply $(\hyperedge_i \cup \hyperedge_j) = \hyperedge \in \reg{i}$. But since there exists a separating hyperedge $\hyperedge, \not\exists \regnoarg : \hyperedge \subseteq \regnoarg$. This implies $\subreghat{i} \not\subseteq \reg{i}$. Combining this with our definition of $\widehat{\hyperedgesetk}$, if $\not\exists \regnoarg : (\hyperedge_{i} \cup \subregrm) \subseteq \regnoarg \Rightarrow (\hyperedge_i \cup \subregrm) \in \widehat{\hyperedgesetk}$. To prove this lemma, we will show that this region cannot exist for all $\hyperedge_{i}$.

    If $\subregrm \not\subseteq \reg{i} \Rightarrow \hyperedge_i \cup \subregrm \not\subseteq \reg{i}$. For the sake of contradiction, suppose $\subregrm \subseteq \reg{i} \forall i$. Let $\regset = \{\reg{1} \dots \reg{\maxcard}\}$. For this to be true, it must be that: $\circled{1} \forall \hypoth \in \subregrm, \hypoth \in \regset \quad \circled{2} \forall \reg{i} \in \regset, \forall \widehat{\hypoth} \in \subreghat{i} : \widehat{\hypoth} \notin \reg{i}, \widehat{\hypoth} \in \regset \backslash \reg{i}$ where $\vert\regset\vert = \maxcard$. However, by definition of $\maxcard$ this cannot be true: the largest such $\regset$ where this holds $\vert\regset\vert = \maxcard-1$. Thus, we have a contradiction, and have shown such a set of regions $\{\reg{1} \dots \reg{\maxcard}\} = \regset : \subregrm \subseteq \reg{i} \; \forall \reg{i}$ cannot exist. Therefore, $\exists \hyperedge_i : (\hyperedge_i \cup \subregrm) \in \widehat{\hyperedgesetk}$.
  \end{proof}

  In order to apply Lemma~\ref{subsublemma:minedgecancel}, we split every $\hyperedge \in \minhyperedgesetnok$ it up into $\hyperedge^{\maxcard-1}$ and $\overline{\subreg{ } }$, where $\hyperedge^{\maxcard-1}$ is the subset of $\hyperedge$ such that $(\hyperedge^{\maxcard-1} \cup \subregrm) \in \widehat{\hyperedgesetk}$, and $\overline{\subreg{ } } = \hyperedge \backslash \hyperedge^{\maxcard-1}$. Let $\overline{\inreg{ } }$ be the number of particles in subregion $\overline{\subreg{ } }$, which we will use in \eref{eq:splitforsublem}:
  \begin{align*}
      \Delta^a_1 + \Delta^c_1 &\geq \left(\totalhypoths - \inregrm\right) \left( \sum_{\hyperedge \in \widehat{\hyperedgesetk} } \prodedgenok \inreg{i} \right)  -  \summinhenok \prodedge \inreg{i} \\
      &=  \left(\totalhypoths - \inregrm\right) \left( \sum_{\hyperedge \in \widehat{\hyperedgesetk} } \prodedgenok \inreg{i} \right)  - \summinhenok \overline{\inreg{} } \prod_{i \in \hyperedge^{\maxcard-1}} \inreg{i \numberthis \label{eq:splitforsublem} }\\
      &\geq  \left(\totalhypoths - \inregrm\right) \left( \sum_{\hyperedge \in \widehat{\hyperedgesetk} } \prodedgenok \inreg{i} \right)  - \sum_{\subreg{j} \in \allsubreg \backslash \subregrm} \inreg{j} \left( \sum_{\hyperedge \in \widehat{\hyperedgesetk} } \prodedgenok \inreg{i} \right) \numberthis \label{eq:applysublem}\\
      &=  \left(\totalhypoths - \inregrm\right) \left( \sum_{\hyperedge \in \widehat{\hyperedgesetk} } \prodedgenok \inreg{i} \right)  - \left(\totalhypoths - \inregrm \right) \left( \sum_{\hyperedge \in \widehat{\hyperedgesetk} } \prodedgenok \inreg{i} \right) \\
      &= 0
  \end{align*}

  Where \eref{eq:applysublem} applies Lemma~\ref{subsublemma:minedgecancel}.

  At this point, we have shown that $\Delta = \Delta^a + \Delta^b + \Delta^c \geq 0$, since $\Delta^b \geq0$ and $\Delta^a + \Delta^c \geq 0$, which is what we needed to show.

  \end{proof}
  It is not hard to see that for any $\subtestobs \subseteq \widehat{\subtestobs} \subseteq \settestobs$, we could show that $\marginalfHEC(\actionitem\!\mid\!\subtestobs) \geq \marginalfHEC(\actionitem\!\mid\!\widehat{\subtestobs}_1) \geq \marginalfHEC(\actionitem\!\mid\!\widehat{\subtestobs}_2) \dots \geq \marginalfHEC(\actionitem\!\mid\!\widehat{\subtestobs})$ In other words, we can always find a sequence of removing one hypothesis at a time to get from $\subtestobs$ to $\widehat{\subtestobs}$ when $\subtestobs \subseteq \widehat{\subtestobs} \subseteq \settestobs$.

\end{proof}
\end{proof}

\subsection{\thmref{th:performance}: Greedy Performance Bound}
We would like to apply Theorem 5.8 of~\cite{golovin_adaptive_2011}. We have already shown adaptive submodularity and strong adaptive monotonicty in \sref{sec:asproof}. The theorem also requires that instances are \emph{self-certifying}, which means that when the policy knows it has obtained the maximum possible objective value immediately upon doing so. See~\cite{golovin_adaptive_2011} for details. As our objective is equivalent for all remaining hypotheses in $\version(\subtestobs)$, our function $\fhec$ is self-certifying.

The performance bound now follows directly from Theorem 5.8 of~\cite{golovin_adaptive_2011}. To apply the theorem, we needed to define two constants: a bound on the maximum value of $\fhec(\subtestobs)$, $Q=1$, and the minimum our objective function can change by, which corresponds to removing one hyperedge, $\eta=p_{\min}^{\maxcard}$. Plugging those into Theorem 5.8 of~\cite{golovin_adaptive_2011} gives $ \cost(\policyHEC) \leq (\maxcard\ln(1/p_{\min})+1) \cost(\policy^*)$.


\end{document}